\renewcommand\footnotetextcopyrightpermission[1]{} 
\newlist{compactitem}{itemize}{3} 
\setlist[compactitem]{label=\textbullet, nosep, leftmargin=0cm,itemindent=.5cm}
\theoremstyle{definition}
\def\algbackskip{\hskip-\ALG@thistlm}
\newcommand{\sys}{RITA\xspace}
\newcommand{\system}{RITA\xspace}
\newcolumntype{M}[1]{>{\centering\arraybackslash}m{#1}}
\begin{document}
\pagestyle{plain}
\title{\sys: Group Attention is All You Need for Timeseries Analytics}

 \author{Jiaming Liang}
 \affiliation{%
   \institution{University of Pennsylvania}
   \city{Philadelphia}
   \state{PA}
   \country{USA}
 }
 \email{liangjm@seas.upenn.edu}

 \author{Lei Cao}
 \authornote{Corresponding Author}
 \affiliation{%
   \institution{Massachusetts Institute of Technology}
   \city{Cambridge}
   \state{MA}
   \country{USA}
 }
 \email{lcao@csail.mit.edu}

 \author{Samuel Madden}
 \affiliation{%
   \institution{Massachusetts Institute of Technology}
   \city{Cambridge}
   \state{MA}
   \country{USA}
 }
 \email{madden@csail.mit.edu}

 \author{Zachary Ives}
 \affiliation{%
   \institution{University of Pennsylvania}
   \city{Philadelphia}
   \state{PA}
   \country{USA}
 }
 \email{zives@cis.upenn.edu}

 \author{Guoliang Li}
 \affiliation{%
   \institution{Tsinghua University}
   \city{Beijing}
   \country{China}
 }
 \email{liguoliang@tsinghua.edu.cn}

\begin{abstract}
Timeseries analytics is of great importance in many real-world applications. Recently, the Transformer model, popular in natural language processing, has been leveraged to learn high quality feature embeddings from timeseries, core to the performance of various timeseries analytics tasks. However, the quadratic time and space complexities limit Transformers' scalability, especially for long timeseries. To address these issues, we develop a timeseries analytics tool, \system, which uses a novel \emph{attention} mechanism, named \emph{group attention}, to address this scalability issue.
Group attention dynamically clusters the objects based on their similarity into a small number of groups and approximately computes the attention at the coarse group granularity. It thus significantly reduces the time and space complexity, yet provides a theoretical guarantee on the quality of the computed attention.
The dynamic scheduler of \system continuously adapts the number of groups and the batch size in the training process, ensuring group attention always uses the fewest groups needed to meet the approximation quality requirement. 
Extensive experiments on various timeseries datasets and analytics tasks demonstrate that \system outperforms the state-of-the-art in accuracy and is significantly faster --- with speedups of up to 63X. 
\end{abstract}
\maketitle




\settopmatter{printacmref=false} 
\renewcommand\footnotetextcopyrightpermission[1]{} 




\section{Introduction}
\label{sec.intro}
\noindent\textbf{Motivation.} Many data driven applications involve processing massive timeseries data, including IoT~\cite{cook2019anomaly}, medical AI~\cite{crabtree1990individual}, stock market~\cite{kraft1977determinants}, and so on. As such, there is a great need for timeseries analytics, such as forecasting~\cite{chatfield2000time}, classification~\cite{ismail2019deep}, clustering~\cite{liao2005clustering}, similarity search~\cite{negi2005time}, and anomaly detection~\cite{teng2010anomaly}, with applications ranging from automatically diagnosing diseases~\cite{bui2017time}, recognizing human activities~\cite{lara2012survey}, to stopping financial fraud~\cite{yue2007review}.


Effective feature extraction~\cite{paparrizos2019grail} lies at the core of almost all these timeseries analytics tasks. Recently researchers~\cite{DBLP:conf/kdd/ZerveasJPBE21} have started leveraging the {\it self-supervised pre-training} methodology of Transformers~\cite{DBLP:conf/nips/VaswaniSPUJGKP17,DBLP:conf/naacl/DevlinCLT19,brown2020language}, which have proven remarkably successful in natural language processing (NLP), to automatically learn high quality feature embeddings from timeseries. In NLP, self-supervised pre-training exploits the sequential patterns (correlations) among the words in sentences to produce {\it contextualized} feature embeddings. Timeseries bear similarity to natural language, because in timeseries data the sequential order among the values (stock  price, volume, etc.) over time matters. That is, each value is highly correlated with other values observed before or after it. Therefore, pre-training a Transformer model which takes the correlations among different observations into account is a natural idea to learn feature embeddings from timeseries. Indeed, the experiments in \cite{DBLP:conf/kdd/ZerveasJPBE21} confirm that Transformer-based methods outperform traditional timeseries analytics techniques.
 

However, existing work~\cite{DBLP:conf/kdd/ZerveasJPBE21} that directly applies Transformers to learn features from timeseries data have been shown not to be scalable to {\it long} timeseries~\cite{li2019enhancing}. The idea of self-attention~\cite{DBLP:conf/nips/VaswaniSPUJGKP17} is central to pre-training methods in NLP: It computes pairwise correlations among different semantic units in a sequence (in NLP, a sentence); as such, it has {\it quadratic time and space} complexity in the length of the input sequence. 
Such an approach places limits on the model's scalability, especially when handling large sequences,  which are common in real-world timeseries applications such as IoT, medical AI, and finance~\cite{zhou2021informer,DBLP:journals/pvldb/CaoTAJYLGSBSCWM19,liu2018open}. Predictions about timeseries may need to look at months or years of historical data to make accurate predictions, spanning hundreds of thousands of samples.  
As an example, in collaboration with a research hospital we have been developing a seizure classifier that automatically detects seizures based on EEG signals (timeseries) collected during the clinical observation of patients. As seizures last only a few seconds, we chunk long EEG data into many 2 second segments and detect seizures at a segment level. However, the classification of a particular segment depends on up to 12 hours of prior signal to determine if one 2 second segment indicates seizure or not, because seizure diagnosis needs to consider long-term trends in the EEG data~\cite{DBLP:journals/pvldb/CaoTAJYLGSBSCWM19}. The number of segments in 12 hours is more than 21k. 
This is far larger than the number of semantic units the typical NLP tasks expect. For example, BERT~\cite{DBLP:conf/naacl/DevlinCLT19} limits the number of units to 512 and even massive models like GPT-3~\cite{brown2020language} limit the number of units to 2048.

Although in NLP some lower-complexity methods have been proposed to {\it approximately} compute self-attention~\cite{kitaev2020reformer,choromanski2020rethinking,wang2020linformer}, their performance degrades dramatically when used on timeseries, due to the gap between natural language and timeseries, as we will show in our experiments.

\noindent\textbf{Proposed Approach.} 
To tackle the aforementioned problem, we develop \textbf{RITA}, a Transformer-based timeseries analytics tool, which uses a novel attention mechanism, called {\bf group attention}, to scale to long timeseries.

Leveraging the periodicity of timeseries, \system chunks the input timeseries into segments and dynamically clusters the segments into a small number (denoted as $N$) of groups. Segments in the same group possess similar feature embeddings during the current training iteration, thus enabling them to approximately share the computation of attention. As the timeseries increases in length, more sharing opportunities become available. \system then computes the self-attention at a group level and produces a {\it compressed group attention matrix}.
In this way, group attention eliminates both computation and memory bottlenecks in Transformer-style models and thus more scalable to long timeseries.

However, making this idea effective and efficient in Transformer architectures is {\it challenging} for several reasons:

\begin{compactitem}
\item \textbf{Efficiently Producing High Quality Feature Embeddings.} Although \system computes the attention matrix at a group level, to preserve the quality of the feature embeddings, it still has to produce different embeddings for different segments. This is because even if some segments share the attention score temporally, it does not mean they should have the same feature embedding. However, using the group attention matrix, the existing self-attention mechanism will only produce a single feature vector for each group. A naive solution would be to restore the original attention matrix from the group attention matrix. 
However, in this case we again get an attention matrix with quadratic space complexity. Because GPUs have limited memory, GPU memory will remain a {\it bottleneck} in group attention. 


\item \textbf{The Number of Groups N.} In \system, the number of groups $N$ is a crucial factor that balances the speed up and the quality of attention approximation. A small $N$ will lead to a large speedup, but the approximation errors can also be significant. On the other hand, although a large $N$ tends to produce high-quality approximations, it inevitably slows down the training process. Therefore, an appropriate $N$ is essential to the performance of group attention. However, $N$ depends on the distributional properties of the dataset. Furthermore, like the classical transformer models, \system stacks multiple attention layers to produce better embeddings. Ideally, different layers should also use different values of $N$. In addition, during the model training phrase, group attention should use different values of $N$ at different iterations to adapt to the varying feature embeddings.
This makes manually setting appropriate $N$ almost impossible. 

\item \textbf{Batch Size.} Moreover, as we want to dynamically adjust $N$ during training, a fixed batch size is sub-optimal: as $N$ decreases, the memory usage of a single sample decreases. This allows a larger batch size which is beneficial, because: 
(1) it makes full use of GPU memory; (2) high-parallelism across the samples in a big batch brings better performance. 
Our experimental study shows that doubling the batch size reduces the training time by 30\%, while still preserving the quality of the model. Thus, \system should dynamically adjust batch size as $N$ changes.
\end{compactitem}





%
To address the above problems, we first propose an {\it embedding aggregation} strategy and a customized {\it group softmax function} to replace the classical softmax function~\cite{DBLP:conf/nips/VaswaniSPUJGKP17}. Together they ensure \system is able to directly use the compressed attention matrix to produce different feature embeddings for different segments. We theoretically show the embeddings \system produces in this way are identical to those produced by first re-storing the original large attention matrix. Thus \system is able to produce high quality embeddings without introducing extra overhead.
Further, we design a GPU friendly algorithm to group the segments {\it in parallel}, effectively minimizing the grouping cost.


Second, we design an {\it adaptive scheduler} which dynamically decides an appropriate $N$ for each group attention layer during the training process. It starts with a large $N$ and iteratively merges groups that are similar to each other. Guided by an error bound on the approximated self-attention that users can tolerate, it automatically determines if two groups are mergeable, performing merging efficiently in a GPU-friendly way.


Moreover, we propose a {\it learning-based method} to model the correlation between the number of groups $N$ and the batch size $B$. This model is used to predict $B$ for a given $N$ when training \system.
Specifically, we first sample some $N$ values in a reasonable range. For each sampled $N$, we find a batch size that consumes up to a certain percentage of GPU memory in a cost-efficient way. Using a small set of mathematical functions as a prior, \system learns a model with only a few <N, B> pairs as ground truth labels.    

Our experiments on public timeseries benchmarks and the MGH EEG data~\cite{DBLP:journals/pvldb/CaoTAJYLGSBSCWM19} confirm that \system outperforms  state-of-the-art methods in accuracy on various timeseries analytics tasks, while our group attention mechanism achieves a 63X speedup with much less memory required, compared to existing self-attention mechanisms~\cite{DBLP:conf/nips/VaswaniSPUJGKP17,choromanski2020rethinking,wang2020linformer}.

\noindent\textbf{Contributions.} The key contributions of this work include:

\begin{compactitem}


\item Our group attention mechanism leverages the periodicity of timeseries, reducing the time and space complexity of the self-attention mechanism with accuracy guarantees, allowing \system to scale to long timeseries data.

\item Guided by an approximation error bound, our adaptive scheduler dynamically adapts the number of groups and the batch size to the distribution properties of the evolving feature embeddings, making group attention efficient and easily tunable.

\item We conduct experiments on various datasets and different analytics tasks, demonstrating that \system is 4 to 63 times faster than the state-of-the-art while achieving better accuracy when handling long timeseries (length $\geq$ 2000).


\end{compactitem}

\section{Background}
\label{sec.preliminary}
\begin{sloppypar}

\begin{figure}[t]
 \vspace{-3mm}
     \centering
     \includegraphics[width=0.8\columnwidth]{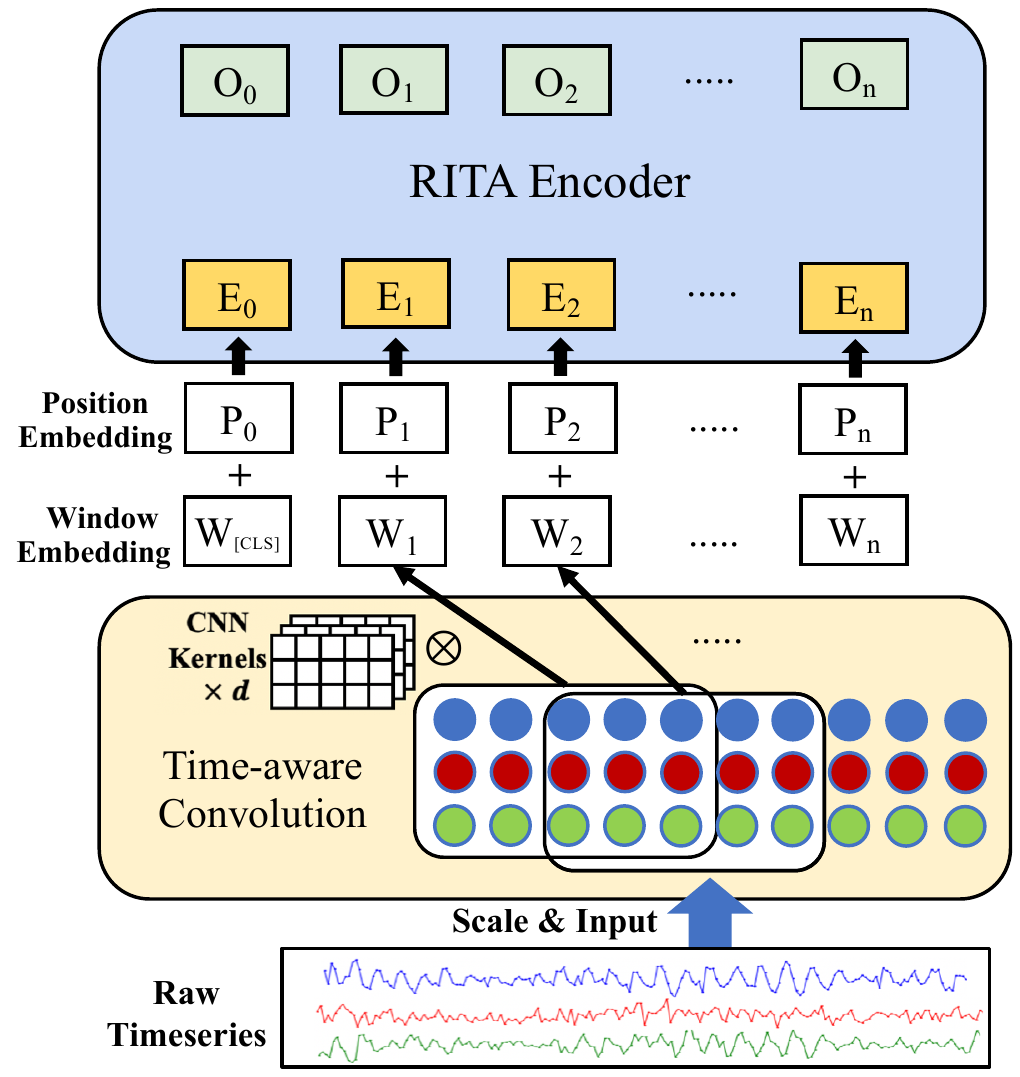}
     \vspace{-2mm}
     \caption{\system Architecture}
     \label{fig.convolution}
     \vspace{-4mm}
\end{figure}


We provide some background on the canonical self-attention module in the Transformer\cite{DBLP:conf/nips/VaswaniSPUJGKP17}.
A \emph{self-attention} module takes $n$ hidden embedding vectors $H \in \mathbb{R}^{n*d_h}$ as input, then projects them to queries ($Q$), keys ($K$) and values ($V$) and performs Scaled-dot Product Attention, which given input hidden state $H$, is computed by:

\vspace{-0.5mm}
\begin{equation}
\label{eq.attention}
\small
\begin{aligned}
    Q = HW_Q, K = HW_K, V = HW_V \\
    O = AV = SoftMax(\frac{QK^T}{\sqrt{d_k}})V 
\end{aligned}
\end{equation}
\vspace{-1mm}
Where $W_Q \in \mathbb{R}^{d_h*d_k}, W_K \in \mathbb{R}^{d_h*d_k}, W_V \in \mathbb{R}^{d_h*d_v}$ are projection matrices for generating $Q,K,V$.
$Q\in \mathbb{R}^{n*d_k}$ is also regarded as the packing of $n$ query vectors $\{q_1,...,q_n\}$ with dimension $d_k$ into a matrix. $K \in \mathbb{R}^{n*d_k}, V\in \mathbb{R}^{n*d_v}$ are regarded as the packing of key vectors $\{k_1,...,k_n\}$ and value vectors $\{v_1,...,v_n\}$ in the same way.

Given a matrix $M \in \mathbb{R}^{L*n}$, the softmax function normalizes $M$ to ensure the sum of each row equals to 1, as shown below.
\vspace{-2mm}
\begin{equation}
\label{eq.softmax}
SoftMax(M_{i,j})=\frac{exp(M_{i,j})}{\sum_{k=0}^{n-1} exp(M_{i,k})}\\
\end{equation}

Note the attention matrix A is an $n \times n$ matrix, where $n$ represents the number of elements in the input sequence (e.g. words in NLP).


\end{sloppypar}


\section{\system overview}
\label{sec.rita}
Given a collection of {\it unlabeled} timeseries, \system first pre-trains a Transformer-style model to produce high quality feature embeddings for timeseries data. This pre-trained model is then used to support various downstream tasks, similar to BERT~\cite{DBLP:conf/naacl/DevlinCLT19}. Next, we overview the model architecture of \system. We show how \system supports various downstream tasks in Appendix~\ref{appendix.downstream}. 

\begin{sloppypar}

As shown in Fig.~\ref{fig.convolution},
RITA is consist of two components: (1) Time-aware Convolution Layer (2) RITA Encoder.

\noindent\textbf{Time-aware Convolution Layer} fills the gap between timeseries and natural language. Despite their high-level similarity, there is a big gap between timeseries and natural language. 
First, in natural language each word, as a discrete semantic unit, has an independent meaning, while each element in a timeseries is a continuous, numerical value and does not necessarily constitute an independent event. 
Furthermore, the input sequences are single-channeled in NLP, but often multi-channeled in timeseries (i.e., sensor data often consists of several related channels).

\system leverages the classical convolution~\cite{NIPS2012_c399862d} strategy to solve this problem. Convolution is widely used to capture the local structures of an image. We use convolution to chunk one input timeseries into a sequence of windows and learn the local structure of each window, similar to the discrete semantic units in natural language. 
It also discovers the correlations across different channels, thus naturally solving the multi-channel problem. 

More specifically, treating a multi-variate timeseries of length $n$ and with $m$ variables as an $\mathit{n \times m}$ matrix $T$, \system uses $d$ convolution kernels to chunk $T$ into \textbf{n} windows and produce one d-dimensional embedding per window using the convolution operation~\cite{NIPS2012_c399862d}. Each convolution kernel corresponds to a $\mathit{w \times m}$ matrix, where $w$ defines the number of timestamps that each convolution kernel covers, identical to the window size in sliding window. 




\noindent\textbf{RITA Encoder} functions as Transformer Encoder as described in the original Transformer work\cite{DBLP:conf/nips/VaswaniSPUJGKP17}. It takes the embeddings of $n$ semantic units $X_1,X_2,...,X_n (X_i \in R^d)$ as input (e.g. embeddings of $n$ windows for a timeseries), then models the correlations between the semantic units and outputs $Y_1,...,Y_n (Y_i \in R^d)$ as the context-aware embedding of each unit. 

What makes RITA Encoder different from Transformer Encoder is that: at the core of Transformer Encoder lies self-attention mechanism which incurs a $O(n^2)$ time complexity and memory usage. This quadratic cost becomes prohibitive for long timeseries and limits the scalablity of Transformer-based models. To make the attention computation efficient yet high-quality, we replace the canonical self-attention with our proposed {\bf group attention}.

\noindent\textbf{Self-supervised Pretraining.}
Inspired by the ``cloze text'' pretraining task in NLP, we designed a mask-and-predict task as the pretraining task for our model. The timeseries is randomly masked and the model should recover the masked values based on corresponding contextual information.

To be specific, we generate masks on time-stamps, with a mask rate $p$. The timeseries is scaled to be non-negative and the values across all the channels on the masked timestamps are set to be -1, an impossible value on normal timestamps. Then the masked timeseries is fed into \system and the output representation is translated to the recovered timeseries by a Transpose Convolution layer.

\end{sloppypar}

\section{Group attention mechanism}
\label{sec.group}

\begin{sloppypar}
Group attention, a novel and efficient approximate attention mechanism, addresses the performance bottleneck of self-attention in the vanilla Transformer.
In this section, we first introduce the framework of group attention and then theoretically establish the bound of its approximation error.

\subsection{The Idea of Group Attention}
\label{sec.group.ideas}

As periodicity is a natural property of timeseries~\cite{10.1145/3448016.3452779}, similar windows frequently occur. Similar windows result in similar queries/keys for attention computation, bringing opportunities for saving computation.


As discussed in Sec.~\ref{sec.preliminary}, $A_{ij}$, the attention score of window $i$ onto window $j$, is determined by the inner product between the query vector of window $i$ and the key vector of window $j$, that is, $q_i \cdot k_j$. 
Given another window $x$, if window $x$ has the similar key vector to window $j$, that is, $k_j$ $\approx$ $k_x$, then $q_i \cdot k_j$ $\approx$ $q_i \cdot k_x$.
In other words, $A_{ij}$ $\approx$ $A_{ix}$ when $k_j$ $\approx$ $k_x$.


This observation inspires our group attention mechanism. That is, we group the windows by their similarity in keys. 
Assuming all windows in the same group have the same attention score onto another window $k$, we then only compute the attention once by using {\it one single key} to represent this group, for example the centroid of the group of keys.
This thus saves significant computation cost.

Better yet, after grouping $n$ windows into $N$ groups, group attention compresses the attention matrix from an $n \times n$ matrix to an $n \times N$ matrix. Because $N$ (number of groups) tends to be much smaller than $n$ (number of windows) due to the periodicity of timeseries, group attention consumes much less memory than the original self-attention mechanism, successfully eliminating the memory bottleneck.  Note that it also doesn't hurt quality all that much, as confirmed in our experiments (Sec.~\ref{sec.exp.effective}).









\begin{figure}[]
\vspace{-3mm}
    \centering    \includegraphics[width=1.05\columnwidth]{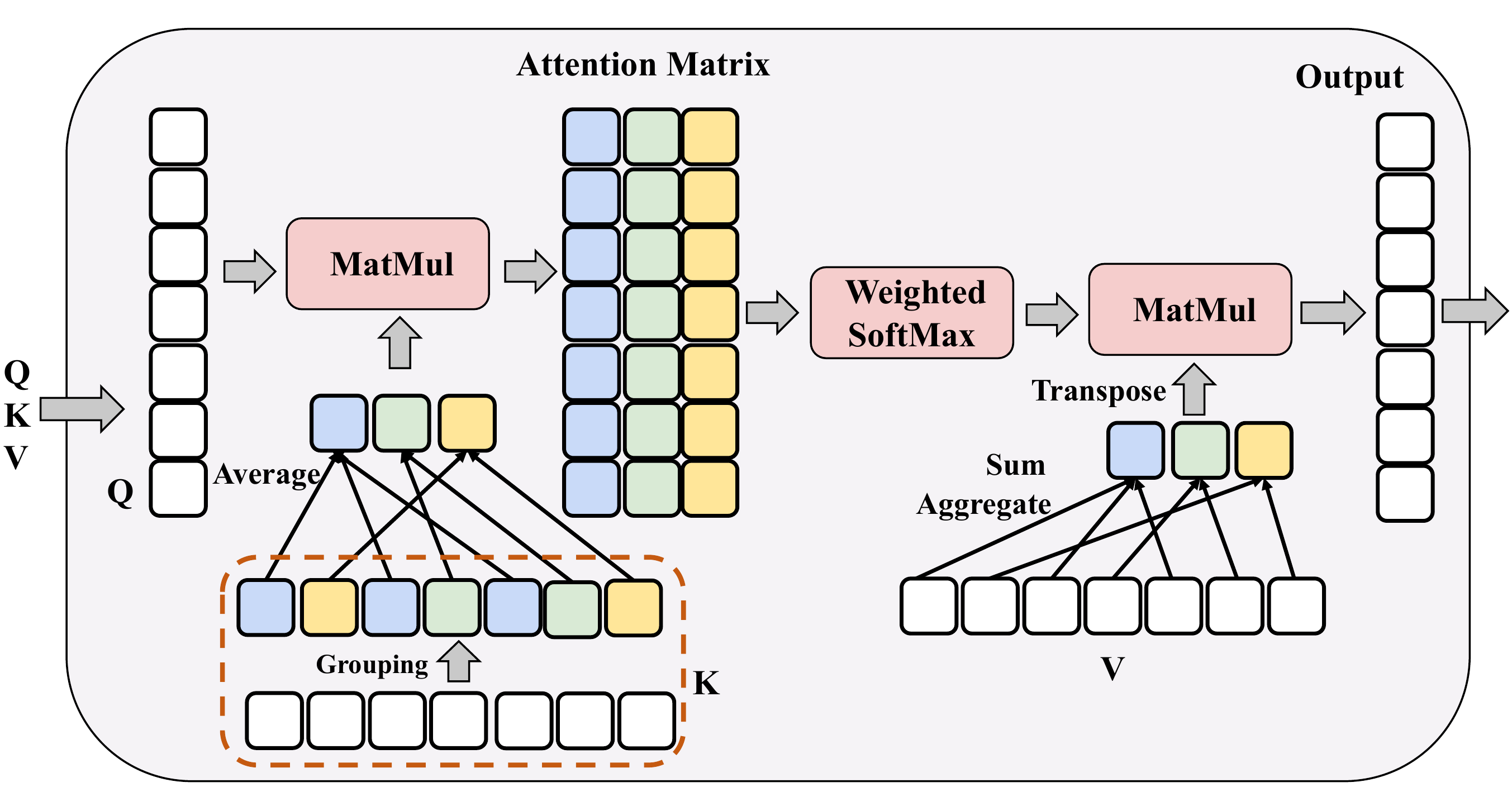}
    \vspace{-7mm}
    \caption{Group Attention}
    \label{fig.group}
    \vspace{-5mm}
\end{figure}

\subsection{Computing the Output Feature Embedding}
\label{sec.group.embedding}
We now discuss how to efficiently compute the output feature embeddings using the small compressed group attention matrix.

\vspace{-1mm}
\subsubsection{Problem: Producing Embeddings w/ Group Attention Matrix\nopunct}\ \\
As described in the Background, once we have acquired the attention matrix $A$, canonical self-attention computes the output embedding $O$ as $\mathit{O = AV}$. Because $A$ is an $n \times n$ matrix and $V$ is an $n \times d_v$ matrix, the matrix product operation still produces an $n \times d_v$ matrix $O$. That is, it produces a $d_v$ dimensional feature vector for each {\it window}.
However, our group attention will produce an $n \times N$ attention matrix $\widetilde{A}$ , where $N$ corresponds to the number of groups. 
In this case the matrix product will produce a $N \times d_v$ matrix $\widetilde{O}$. That is, it produces a feature vector for each {\it group}. 
However, our goal is to produce different embeddings for different windows, because even if some windows share the attention score temporally, it does not mean they should have the same feature embedding. 

\noindent\textbf{A Naive Solution.} A naive solution would be to restore the full attention matrix $A$ from the group attention matrix $\widetilde{A}$. For example, given one group composed of $win_i$ and $win_j$, we map its group attention vector in  $\widetilde{A}$ into two rows that correspond to $win_i$ and $win_j$ in $A$. 
However, in this case we again get a $n \times n$ attention matrix; and GPU memory remains a {\it bottleneck} in group attention.

\vspace{-1mm}
\subsubsection{Solution: Embedding Aggregation and Group SoftMax\nopunct}\ \\
\label{sec.group.efficient}
Using an \textit{embedding aggregation} operation and a \textit{group softmax} function, \system produces $n$ embeddings without restoring the full attention matrix. Fig.~\ref{fig.group} shows the workflow of group attention. 

\noindent\textbf{Embedding Aggregation.} The idea is inspired by the observation on the matrix product operation $\mathit{O = AV}$ conducted on the fully restored attention matrix $A$. 

Given an element $O_{i,j}$ of $O$ corresponding to the $j^{th}$ dimension of $win_i$'s feature vector, $O_{i,j}$ = $a_i \cdot v_j$, where vector $\mathit{a_i \in \mathbb{R}^{n}}$ denotes the $i^{th}$ row of the attention matrix $A$ and vector $\mathit{v_j \in \mathbb{R}^{n}}$ denotes the $j^{th}$ dimension of all the $n$ feature vectors. 
Given $\mathit{a_i = <a_i^1, a_i^2, \cdots, a_i^n>}$ and $\mathit{v_j = <v_j^1, v_j^2, \cdots, v_j^n>}$, $O_{i,j}$ = $\mathit{\sum_{k=1}^n a_i^k v_j^k}$.

As an example, assume $win_1$ and $win_2$ belong to the same group $G_1$. Then $a_i^1$ = $a_i^2$ = $\widetilde{a}_i^1$, where $\widetilde{a}_i^1$ $\in$ $\widetilde{A}$ corresponds to the attention of group $G_1$ onto $win_i$. 
Therefore, $a_i^1 v_j^1$ + $a_i^2 v_j^2$ = $\widetilde{a}_i^1$ ($v_j^1$ + $v_j^2$).

As an immediate generalization of the above analysis, if we aggregate up the windows that belong to the same group and convert the n-dimensional feature vector $v_j$ into a $N$-dimensional group feature vector $\widetilde{v}_j$ beforehand, we could directly use the group attention vector $\widetilde{a}_i$ and the group feature vector $\widetilde{v}_j$ to compute $O_{i,j}$.

Using embedding aggregation, \system is able to produce the feature embedding $\widetilde{O}$ that is identical to the embedding $O$ produced by using the full attention matrix $A$ and the embedding matrix $V$.

\noindent\textbf{Group Softmax Function.}
In canonical self-attention the attention matrix $A$ is computed as $A$ = $\mathit{SoftMax(\frac{QK^T}{\sqrt{d_k}})}$. To compute $A$, we have to first compute $QK^T$ (denoted as $P$) which is an $n \times n$ matrix. Then normalizing the $P$ matrix with softmax produces the attention matrix $A$. 

Group attention follows the same procedure. But after grouping keys into $\widetilde{K}$, $Q\widetilde{K}^T$ produces an $n \times N$ matrix $\widetilde{P}$. Due to the non-linearity of the softmax function, applying softmax directly on $\widetilde{P}$ will result in a group attention matrix $\widetilde{A}$ from which we are not able to recover a full attention matrix that is identical to first restoring $\widetilde{P}$ to $P$ and then applying softmax on $P$. The $A$ matrix produced by the latter is desirable, as we want to approximate the original attention matrix as accurately as possible. 
However, restoring the small $n \times N$ $\widetilde{P}$ matrix is not memory efficient, as it will end up with a full $n \times n$ matrix $P$. 

To solve the above problems, we introduce a new \textbf{group softmax} function to replace the original softmax function (Eq.~\ref{eq.softmax}).

\vspace{-2mm}
\begin{equation}
\label{eq.groupSoftmax}
GroupSoftMax(\widetilde{P_{i,j}}) = \frac{exp(P_{i,j})}{\sum_{k=0}^{N-1} count_k exp(P_{i,k})}\\
\end{equation}

In Eq.~\ref{eq.groupSoftmax}, $count_k$ represents the number of windows that Group $G_k$ contains. Compared to the original softmax, our group softmax considers each group $G_k$ as $count_k$ elements and counts it $count_k$ times when summing up the exponential of each group's $P_{i,k}$.
In this way, the group softmax function operating on the small $\widetilde{P}$ matrix will produce {\it exactly the same} result to the softmax function operating on the full $P$ matrix.

\noindent\textbf{Theoretical Guarantee.} In Appendix~\ref{appendix.proof.groupAttention}, we prove that the group softmax function and the embedding aggregation operation produce the same output feature embedding with the naive method that has to first restore the big full attention matrix.

We show an efficient implementation of the embedding aggregation operation and group softmax function in Appendix~\ref{appendix.groupAttention}, Alg.~\ref{algo.grpattn}.

\noindent\textbf{Time Complexity.}
The time complexity of Alg.~\ref{algo.grpattn} is $O(nNd)$ and the space complexity is $O(nN)$, while the time and space complexity of the original self-attention mechanism are $O(n^2d)$ and $O(n^2)$.

\subsection{Error Bound}
\label{sec.group.error}

Group attention produces a group attention matrix $\widetilde{A}$ which approximates the attention matrix $A$ produced by the classical self-attention with a {\it bounded error}, as shown in Lemma~\ref{lm.grperrorbound}.
\vspace{-1mm}
\begin{lemma}
\label{lm.grperrorbound}
Let $R$ be the radius of the ball where all key vectors live; $\widetilde{k}_i$ be the representative of the group that contains key $k_i$. Let $\overline{A}$ denote the full attention matrix restored from $\widetilde{A}$. Suppose the distance between $\widetilde{k}_i$ and $k_i$ $(||\widetilde{\mathbf{k}}_i-\mathbf{k}_i||)$ satisfies: $||\widetilde{\mathbf{k}}_i-\mathbf{k}_i|| \leq$ {\bf d}.

Then $\forall$ $\epsilon > 1$, if $\mathit{d \leq \frac{\ln(\epsilon)}{2R}}$, $\mathit{\frac{1}{\epsilon} \leq \frac{\overline{A}_{i,j}}{A_{i,j}} \leq \epsilon}$
\end{lemma}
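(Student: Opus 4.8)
The plan is to write both $A_{i,j}$ and the restored matrix $\overline{A}_{i,j}$ explicitly as softmax expressions and compare them term by term. By the group-softmax construction of Section~\ref{sec.group.efficient}, restoring $\widetilde{A}$ to a full matrix amounts to replacing every key $k_l$ by the representative $\widetilde{k}_l$ of its group, so that
\begin{equation*}
A_{i,j} = \frac{\exp(\mathbf{q}_i \cdot \mathbf{k}_j)}{\sum_{l} \exp(\mathbf{q}_i \cdot \mathbf{k}_l)}, \qquad \overline{A}_{i,j} = \frac{\exp(\mathbf{q}_i \cdot \widetilde{\mathbf{k}}_j)}{\sum_{l} \exp(\mathbf{q}_i \cdot \widetilde{\mathbf{k}}_l)}.
\end{equation*}
Taking the ratio and separating the numerator from the normalizers gives
\begin{equation*}
\frac{\overline{A}_{i,j}}{A_{i,j}} = \exp\!\big(\mathbf{q}_i \cdot (\widetilde{\mathbf{k}}_j - \mathbf{k}_j)\big) \cdot \frac{\sum_{l} \exp(\mathbf{q}_i \cdot \mathbf{k}_l)}{\sum_{l} \exp(\mathbf{q}_i \cdot \widetilde{\mathbf{k}}_l)},
\end{equation*}
so it suffices to bound these two factors separately.

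The core estimate is a per-key perturbation bound. For any $l$, Cauchy--Schwarz together with the hypothesis $\norm{\widetilde{\mathbf{k}}_l - \mathbf{k}_l} \le d$ gives $|\mathbf{q}_i \cdot (\widetilde{\mathbf{k}}_l - \mathbf{k}_l)| \le \norm{\mathbf{q}_i}\, d \le R d$, where I use that the query, like the keys, lies in the ball of radius $R$. Hence each factor $\exp(\mathbf{q}_i \cdot (\widetilde{\mathbf{k}}_l - \mathbf{k}_l))$ lies in $[e^{-Rd}, e^{Rd}]$, which immediately controls the first factor above. For the second factor I would write each numerator summand as $\exp(\mathbf{q}_i \cdot \mathbf{k}_l) = \exp(\mathbf{q}_i \cdot \widetilde{\mathbf{k}}_l)\,\exp(\mathbf{q}_i\cdot(\mathbf{k}_l-\widetilde{\mathbf{k}}_l))$ and invoke the elementary fact that if $e^{-Rd} y_l \le x_l \le e^{Rd} y_l$ for positive $y_l$, then the same two-sided bound holds for $\sum_l x_l / \sum_l y_l$. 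Thus the ratio of normalizers also lies in $[e^{-Rd}, e^{Rd}]$.

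Multiplying the two bounds yields $e^{-2Rd} \le \overline{A}_{i,j}/A_{i,j} \le e^{2Rd}$. Substituting the hypothesis $d \le \ln(\epsilon)/(2R)$ --- which is precisely where the factor $2$ and the radius $R$ in the stated condition come from --- gives $2Rd \le \ln \epsilon$, hence $e^{2Rd} \le \epsilon$ and $e^{-2Rd} \ge 1/\epsilon$, exactly the claimed inequality; the assumption $\epsilon > 1$ guarantees the admissible radius $d$ is nonnegative so the condition is meaningful.

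I expect the delicate step to be the normalizer (denominator) factor rather than the single-term numerator factor: one must argue that a ratio of two sums inherits the per-term multiplicative bounds, which is the monotonicity/averaging fact noted above, and one must check that the restoration of $\overline{A}$ genuinely replaces keys by representatives in both the numerator and in every normalizer term. A secondary point worth stating carefully is the norm bound on $\mathbf{q}_i$: the clean constant $2R$ relies on the queries sharing the radius-$R$ ball with the keys (or, more generally, on replacing $R$ by $\max(\norm{\mathbf{q}_i}, R)$), so I would make that assumption explicit at the outset.
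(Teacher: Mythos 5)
Your proof is correct and follows essentially the same route as the paper's: the same decomposition of $\overline{A}_{i,j}/A_{i,j}$ into the single-term factor $\exp(\mathbf{q}_i\cdot(\widetilde{\mathbf{k}}_j-\mathbf{k}_j))$ times the ratio of the two normalizers, the same per-key bound $|\mathbf{q}_i\cdot(\widetilde{\mathbf{k}}_l-\mathbf{k}_l)|\leq \|\mathbf{q}_i\|\,d\leq Rd$ (the paper phrases it via the cosine formula, which is the same estimate), and the same combination giving $e^{-2Rd}\leq \overline{A}_{i,j}/A_{i,j}\leq e^{2Rd}$ before substituting $d\leq \ln(\epsilon)/(2R)$. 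The only difference is that you make explicit two points the paper leaves implicit --- the mediant-type inequality showing a ratio of sums inherits per-term multiplicative bounds, and the assumption $\|\mathbf{q}_i\|\leq R$ (the lemma states $R$ only as the radius of the key ball, yet the proof silently uses it to bound the query norm) --- both of which are worthwhile clarifications rather than deviations.
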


Lemma~\ref{lm.grperrorbound} shows that the error bound $\epsilon$ of the group attention is determined by the distance $d$. 
As discussed in Sec.~\ref{sec.scheduler.group}, it inspires us to design a strategy to dynamically determine the number of groups $N$ -- the most critical parameter of group attention. 
Please refer to Appendix~\ref{appendix.proof.errorBound} for the proof. 

\subsection{GPU Friendly Grouping Method}
In this section, we discuss the implementation of a grouping method. To make group attention efficient and effective, the grouping method has to satisfy the following requirements: 

(1) Tight distance bound: to ensure the approximation quality, the distance between each key and its group representative should be minimized according to Lemma~\ref{lm.grperrorbound}.

(2) Lightweight: to ensure the performance gain, the grouping method must be lightweight, at worst not exceeding the complexity of group attention itself ($O(Nn)$).

(3) GPU friendly: to take advantage of GPUs, we prefer a grouping method that mainly consists of matrix operations, which can be efficiently executed on a GPU.

To satisfy the above requirements, after thorough investigation on various clustering algorithms, we design a GPU friendly K-means~\cite{lloyd1982least} as the grouping method. 

First, K-means minimizes the overall distance between any object and its cluster center, hence naturally satisfying Requirement 1. 

Second, given $N$ centers, in each iteration the time and space complexity of K-means is $O(nN)$. Usually, the iteration goes until convergence. However, we observe that rather than seeking a perfect K-means clustering, training a few iterations is sufficient to get a good grouping for group attention, because typically the later iterations only slightly update the clustering and group attention is robust to such imperfection. 

Third, we design a GPU-friendly implementation of K-means. The performance bottleneck of K-means comes from the distance computation between each vector and its center, that is, $\mathit{|v_i-c_j|=\sqrt{(v_i-c_j)^2}, i\in [1,n], j\in [1,N]}$. The performance bottleneck is $v_i-c_j$.
We instead use a different formulation: $|v_i-c_j|=\mathit{|v_i-c_j|=\sqrt{|v_i|^2+|c_j|^2-2 v_i \cdot c_j}, i\in [1,n], j\in [1,N]}$.
This is because in this formulation, the performance bottleneck is $v_i \cdot c_j$, which could be implemented as a matrix product operation.
Although the complexity of the two formulations is the same, in GPUs matrix product is much more efficient than pairwise difference.







\end{sloppypar}

\vspace{-2mm}
\section{Adaptive scheduler}
\label{sec.scheduler}
Next, we present the adaptive scheduler of \system which addresses the challenges of determining an appropriate number of groups $N$ and accordingly the batch size $B$, as described in Introduction. Using a dynamic scheduling method we propose, the scheduler automatically determines and adjusts $N$ and $B$ based on the distributional properties of the feature embeddings produced over the iterative training process, while guaranteed to produce high quality attention approximation that meets the requirement of users.

In Sec.~\ref{sec.scheduler.group} we show how \system automatically determines $N$. 
Then we introduce in Sec.~\ref{sec.scheduler.batch} the learning-based method which given an $N$, immediately predicts a good batch size.

\subsection{Dynamically Determining the Number of Groups N}
\label{sec.scheduler.group}
Without loss of generality, we use one group attention module as an example to show how \system automatically gets an appropriate $N$. The adaptive scheduler of \system starts with a large $N$ and decreases it dynamically. This is because in the training process of \system, the feature embeddings produced epoch by epoch tend to get stabler and stabler and gradually converge, thus no need to increase $N$.

\system reduces the number of groups by merging similar groups. Intuitively, given two groups, we could measure their similarity based on the distance of their centers. If the distance between their centers is smaller than a distance threshold, then the two groups could be merged. However, setting an appropriate distance threshold seems hard -- as difficult as setting an appropriate $N$.

To solve this problem, \system leverages the error bound of group attention introduced in Sec.~\ref{sec.group.error}. It only requires users to set an error bound $\epsilon$, and then uses Lemma~\ref{lm.grperrorbound} to translate $\epsilon$ to a distance threshold $d$.
\system then uses Lemma~\ref{lm.mergealbe} to determine if merging some given clusters still meets the error bound threshold $\epsilon$. 

\vspace{-2mm}
\begin{lemma}
\label{lm.mergealbe}
Denote $c_k$ to be the cluster center of $cluster_k$. Assume the existing grouping satisfies $\mathit{\forall k,\mathop{max}\limits_{x \in cluster_k} |c_k-x| \leq d}$
, thus satisfying an error bound $\epsilon$ by Lemma~\ref{lm.grperrorbound}.
If there exist $m$ clusters, namely, $cluster_{k_1},cluster_{k_2},...,cluster_{k_m}$, satisfying that:
\begin{equation}
\label{equ.mergecond}
\mathop{max}\limits_{x \in cluster_{k_i}} |c_{k_i}-c_{k_j}|+|x-c_{k_i}| \leq d, i,j \in [1,m]
\end{equation}
merging them into one cluster still meets the error bound $\epsilon$.
\end{lemma}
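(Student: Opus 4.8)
The plan is to deduce this directly from Lemma~\ref{lm.grperrorbound} by exhibiting, for the merged cluster, a single representative to which every member key lies within distance $d$; once that is established, Lemma~\ref{lm.grperrorbound} immediately certifies that the error bound $\epsilon$ continues to hold. The natural candidate for the merged representative is one of the surviving centers, say $c_{k_{j_0}}$ for an arbitrary fixed $j_0 \in [1,m]$. The argument will in fact work for any such choice, since condition~(\ref{equ.mergecond}) is quantified over all pairs $i,j \in [1,m]$.

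First I would recall the reduction supplied by Lemma~\ref{lm.grperrorbound}: a group satisfies error bound $\epsilon$ as soon as every key assigned to it lies within distance $d = \frac{\ln(\epsilon)}{2R}$ of the group's representative. Hence it suffices to show $|x - c_{k_{j_0}}| \leq d$ for every key $x$ in the merged cluster.

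Next I would take an arbitrary key $x$ in the merged cluster, which originates from one of the constituent clusters, say $x \in cluster_{k_i}$. A single application of the triangle inequality gives $|x - c_{k_{j_0}}| \leq |x - c_{k_i}| + |c_{k_i} - c_{k_{j_0}}|$. Bounding $|x - c_{k_i}| \leq \max_{y \in cluster_{k_i}} |y - c_{k_i}|$ and then invoking condition~(\ref{equ.mergecond}) with the pair $(i,j_0)$ shows the right-hand side is at most $d$. Since $x$ was arbitrary, every key of the merged cluster lies within $d$ of $c_{k_{j_0}}$, and applying Lemma~\ref{lm.grperrorbound} to the merged group concludes the proof.

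The argument is essentially a one-line triangle inequality, so I do not expect a genuine obstacle. The only points requiring care are (i) parsing condition~(\ref{equ.mergecond}) correctly --- the maximum is taken over $x \in cluster_{k_i}$ while the inter-center term $|c_{k_i} - c_{k_j}|$ is constant in $x$, so the condition reads $\max_{x \in cluster_{k_i}} |x - c_{k_i}| + |c_{k_i} - c_{k_j}| \leq d$; and (ii) noticing that the universal quantifier over $j$ in~(\ref{equ.mergecond}) is exactly what grants the freedom to fix one surviving center $c_{k_{j_0}}$ and have it serve as a valid representative \emph{simultaneously} for keys drawn from every constituent cluster, which is guaranteed precisely because~(\ref{equ.mergecond}) holds for all $i$ at $j = j_0$.
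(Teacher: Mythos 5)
Your proof is correct, but it certifies a different representative for the merged cluster than the paper does, and the distinction is worth noting. You fix one of the surviving centers $c_{k_{j_0}}$ as the merged group's representative and apply the triangle inequality once: for $x \in cluster_{k_i}$, $|x - c_{k_{j_0}}| \leq |x - c_{k_i}| + |c_{k_i} - c_{k_{j_0}}| \leq d$ by condition~(\ref{equ.mergecond}) with the pair $(i, j_0)$. The paper instead takes the representative to be the weighted centroid $c' = \bigl(\sum_{i=1}^m n_{k_i} c_{k_i}\bigr) / \bigl(\sum_{i=1}^m n_{k_i}\bigr)$ of the merged cluster --- i.e., the actual mean of all member keys, which is what a K-means-style grouping would use as the new center --- and shows $|x - c'| \leq d$ via the triangle inequality followed by a weighted-average (convexity) bound: $|c_{k_i} - c'| \leq \sum_j n_{k_j}|c_{k_i} - c_{k_j}| / \sum_j n_{k_j}$, so that $|x-c'|$ is bounded by a weighted average of terms $|c_{k_i}-c_{k_j}| + |x - c_{k_i}|$, each at most $d$. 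Your argument is more elementary (no weighted-mean manipulation, and it exploits the full strength of the universal quantifier over $j$, which the paper's proof does not strictly need in that form), and it suffices for the lemma as stated, since Lemma~\ref{lm.grperrorbound} only requires \emph{some} representative within distance $d$ of every member key. What the paper's choice buys is consistency with the system's semantics: after merging, the cluster's center in subsequent K-means iterations is the centroid of its members, so the paper's proof guarantees the error bound for the representative the implementation will actually use, whereas yours would additionally require the implementation to retain $c_{k_{j_0}}$ as the representative key rather than recompute the mean.
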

Please refer to Appendix~\ref{appendix.proof.merge} for the proof.

\noindent\textbf{Finding the Mergable Clusters.} We formulate the problem of finding mergeable clusters using graph theory: 

(1) each cluster is a node in the graph; 

(2) if $cluster_i$ and $cluster_j$ satisfy:

$\mathop{max}\limits_{x \in cluster_{i}} |c_{i}-c_{j}|+|x-c_{i}| \leq d$, and
$\mathop{max}\limits_{x \in cluster_{j}} |c_{j}-c_{i}|+|x-c_{j}| \leq d$

there is an undirected edge between $node_i$ and $node_j$;

In this scenario, finding the maximum number of \textit{mergeable} clusters is equivalent to finding the minimal clique cover in the corresponding graph, which is an NP-hard problem~\cite{karp1972reducibility}. 
Such heavy computation overhead is not acceptable for \system. We thus offer a simplified solution:

(1) Halve the clusters into two sets $S_1,S_2$;

(2) If $cluster_i \in S_1$ and $cluster_j \in S_2$ satisfy:
\vspace{-2mm}
\begin{equation}
\small
\label{eq.simplemerge}
\begin{aligned}
\mathop{max}\limits_{x \in cluster_{i}} |c_{i}-c_{j}|+|x-c_{i}| \leq d,  \mathop{max}\limits_{x \in cluster_{j}} |c_{j}-c_{i}|+|x-c_{j}| \leq \frac{d}{2}
\end{aligned}
\end{equation}

$cluster_j$ is marked.

(3) Decrease the number of clusters by counting the masks in $S_2$.

\smallskip
\noindent
In this solution, clusters in $S_1$ can be regarded as transfer nodes.
If \eqref{eq.simplemerge} holds for $(cluster_i \in S_1,cluster_{j_1}\in S_2)$ and $(cluster_i \in S_1,cluster_{j_2}\in S_2)$, respectively, we have,
\vspace{-1mm}
\begin{equation}
\small
\begin{aligned}
&\mathop{max}\limits_{x \in cluster_{j_1}} |c_{j_1}-c_{j_2}|+|x-c_{j_1}| \\
\leq &\mathop{max}\limits_{x \in cluster_{j_1}} |c_{j_1}-c_{i}|+|c_{i}-c_{j_2}|+|x-c_{j_1}|\\
\leq &\mathop{max}\limits_{x \in cluster_{j_1}} |c_{j_1}-c_{i}|+|c_{i}-c_{j_2}|+|x-c_{j_1}|+|x-c_{j_2}| \leq d
\end{aligned}
\end{equation}

Thus \eqref{equ.mergecond} holds when merging several clusters in $S_2$ with one cluster in $S_1$. As a result, we can greedily merge clusters in $S_2$, as illustrated in step(3). 

Assume the number of clusters decreases by $D$ after merging, we apply a momentum update~\cite{qian1999momentum} on the number of clusters $N$, as is commonly used in machine learning to smooth the changing of $N$ and avoid sample selection bias. To be specific:
$N_{new}=\alpha (N-D)+(1-\alpha)N$, where $\alpha$ is a hyper-parameter for momentum.

\subsection{Dynamically Determining the Batch Size}
\label{sec.scheduler.batch}
Because of the dynamic grouping operation, the computational graph in deep learning training~\cite{abadi2016tensorflow} varies from sample to sample. As a result, it is impossible to precisely compute a batch's GPU memory usage without indeed feeding it into the model. To overcome this problem, \system learns a batch size prediction function offline; then at the \system training time, given a number of groups $N$, \system uses this function to predict a proper batch size.

When the model architecture and hardware are fixed, the batch size depends on the length of the timeseries $L$ and the average group number among all attention module $\overline{N}$. So \system samples several $(L_i,\overline{N}_i)$ pairs and estimate a proper batch size for each pair. 

More specifically, given a user-defined timeseries maximal length $L_{max}$, we randomly sample integral points $(L_i,N_i)$ from plane $\{1 \leq L \leq L_{max}, 1\leq N \leq L\}$. Then we use a binary search based algorithm to find the maximal batch size $B_i$ that consumes less than $90\%$ available GPU memory, aiming to avoid wasting GPU memory and the risks of out of memory (OOM).

Treating these pairs as ground truth labels, we use function fitting~\cite{guest2012numerical} to learn the batch size predicting function $\mathit{B = f(L,N)}$, where B is a function of two variables $L$ and $N$. 



\noindent\textbf{Learning the Prediction Function.}
We apply \textit{curve fit} from SciPy~\cite{2020SciPy-NMeth} as the function fitting tool to fit the two-variable function $B_i=f(L_i,N_i)$ on plane $\{1 \leq L \leq L_{max}, 1\leq N \leq L\}$.  

We observe that applying one function to the whole plane incurs a huge estimation error. 
So we develop a dynamic-programming (DP) method to divide the plane into several sub-planes and apply a distinct function to each sub-plane respectively. It is {\bf optimal} in minimizing the total estimation error on all sub-planes

With the learned prediction function $f$, we can estimate a proper batch size for any $(L,N)$ during training, even if it is not seen in the sampled $(L_i,N_i)$ pairs. 

\noindent\textbf{The Algorithms and Optimality Proof.} Please refer to Appendix~\ref{appendix.batch} for the pseudo code of the binary search-based algorithm and the description of the DP method for plane-division and the proof for its optimality.


\section{Evaluation}
\label{sec.exp}

Our experimental study focuses on the following questions:

1. \textbf{Effectiveness and efficiency of \system}: How does \system compare with other Transformer-based methods and traditional timeseries representation learning methods in accuracy and efficiency?



2. \textbf{Ablation Study}: How do the key techniques of \system work?

\subsection{Experimental Setup}
\label{sec.exp.setup}
\noindent\textbf{Datasets.}
We evaluate \system on classification and imputation tasks using 5 multi-variate and 3 uni-variate timeseries datasets.

\begin{compactitem}
\item
\textit{\textbf{WISDM}}~\cite{weiss2019smartphone} is a popular multivariate timeseries dataset generated from the accelerometer in the mobile phone.
The subjects performed 18 daily activities (e.g. walking, jogging). The dataset was collected from 51 subjects and the sampling rate is 20 Hz.

\item
\textit{\textbf{HHAR}} dataset~\cite{stisen2015smart} contains sensing data of accelerometer collected
from 9 users performing 5 activities with 12 different smartphones (varying in sampling rate). This increases the complexity of the task and thus can test the model’s robustness.

\item
{\textit{\textbf{RWHAR} RealWorld HAR}} dataset~\cite{sztyler2016body} covers 15 subjects performing 8 locomotion-style activities. Each subject wears the sensors for approximately ten minutes. The sampling rate is 50 Hz.

\item
\textit{\textbf{ECG}} dataset~\cite{liu2018open} consists
of 10,000 EEG recordings for arrhythmia classification. Each recording has an uncertain length ranging from 6 to 60 seconds sampled at 500 Hz. The ECG recordings correspond to 9 types of heart problems such as atrial fibrillation (AF) and premature atrial contraction (PAC), etc. 

\item
{\textit{\textbf{MGH}}}~\cite{DBLP:journals/pvldb/CaoTAJYLGSBSCWM19} is a EEG dataset collected by Mass. General Hospital. Each timeseries corresponds to the EEG data observed from one patient during their stay in ICU for a couple of days. The EEG monitoring produced data with 20 channels. The sampling rate is 200 HZ. So it produces very long timeseries.

\item
{\textit{\textbf{WISDM*/HHAR*/RWHAR*}}}
are three uni-variate datasets derived by picking one channel from \textit{WISDM/HHAR/RWHAR}.

\end{compactitem}

\noindent\textbf{Training/Validation Data Generation.}
We apply a sliding window on the raw timeseries to get training/validation samples. 
The size of the sliding window is set as 200 on small datasets (WISDM, HHAR, RWHAR), 2000 on medium size dataset (ECG), and 10,000 on the large dataset (MGH). Table~\ref{tab.dataset} shows the statics of the generated datasets. They are randomly split into training/validation set in a proportion of 0.9/0.1. 
In ``pretraining + few-label finetuning'' scenario, we use 100 labeled data per class for finetuning. We guarantee that training set does not overlap with the validation set.

\begin{table}[htbp]
\vspace{-3mm}
\centering
\footnotesize
\begin{tabular}{c|c|c|c|c|c}
\toprule
    Dataset & Train. Size & Valid. Size & Length & Channel & Classes \\
    \hline
    WISDM & 28,280 & 3,112 & 200 & 3 & 18\\
    HHAR & 20,484 & 2,296 & 200 & 3 & 5\\
    RWHAR & 27,253 & 3,059 & 200 & 3 & 8\\
    ECG & 31,091 & 3,551 & 2000 & 12 & 9\\
    MGH & 8,550 & 950 & 10000 & 21 & N/A\\
 \bottomrule
\end{tabular}
\caption{The statistics of the datasets}
\label{tab.dataset}
\vspace{-7mm}
\end{table}


\noindent\textbf{Alternative Methods.} 
We compare \system against the SOTA Transformer based timeseries representation learning method {\bf TST}~\cite{DBLP:conf/kdd/ZerveasJPBE21}. 
To evaluate our group attention (referred to as \textbf{Group Attn.}), we develop three baselines by replacing the group attention component in \system with the classic vanilla Self-Attention~\cite{DBLP:conf/nips/VaswaniSPUJGKP17}(referred to as \textbf{Vanilla}) and two SOTA methods that reduce the complexity of self-attention by approximation in NLP, namely, Performer~\cite{choromanski2020rethinking} (referred to as \textbf{Performer}) and Linformer~\cite{wang2020linformer} (referred to as \textbf{Linformer}). 
Similar to our proposed Group Attn., Vanilla, Performer, Linformer all use \system's time-aware convolution operation (Sec.~\ref{sec.rita}) to turn timeseries segments into input feature vectors.

We also compare Group Attn. against \textbf{GRAIL}~\cite{paparrizos2019grail}, which is the SOTA of the non-deep learning methods for timeseries representation learning. GRAIL supports classification tasks by feeding the learned representations into a Support-Vector Machine~\cite{cortes1995support} or K-Nearest Neighbor~\cite{fix1989discriminatory} classifier. 
Note GRAIL only targets {\bf uni-variate} timeseries and cannot support imputation tasks.

\noindent\textbf{Methodology.}
We mainly focus on two downstream tasks:

(1) \textbf{Classification}. 
First, we train Group Attn. and the baselines with full labels from scratch to test the effectiveness of \system framework and the approximation quality of our group attention. 


Second, to measure the effectiveness of self-supervised pretraining, we evaluate the accuracy of training on few labeled timeseries with/without pretraining on large scales of unlabeled timeseries. 
To be specific, we split the training set into a {\it pretraining} set and a {\it finetuning} set, with very few data in the latter (100 labeled samples per class in our experiment). 
We train the model on the cloze pretraining task with a mask rate $p=0.2$. 
Then we train two classification models using the finetuning set, either based on the pretrained version or from scratch. We repeat the experiment 5 times with random data splits and report the median accuracy. 


(2) \textbf{Imputation}. We run the imputation task on the datasets used in classification as well as the large unlabeled MGH dataset, and measure the mean square error and absolute imputation error. To get timeseries with missing values, we randomly mask the values with an expected mask rate of $p=0.2$. The masked values are replaced with a special value. 

Finally, to evaluate Group Attn.'s benefit on {\bf efficiency}, the total time of forward computation, backward propagation, and grouping are measured for all methods in all the experiments. 

To save space, we only report the average training time per epoch here and refer readers to Appendix~\ref{sec.sup.evaltime} for the inference time.

We first compare against the Transformer-based methods on multi-variate datasets (sec.~\ref{sec.exp.effective}, \ref{sec.exp.efficiency}), then compare against the non-deep learning method GRAIL on uni-variate datasets (sec.~\ref{sec.exp.univariate}).

\noindent\textbf{Configuration.} Please refer to Appendix~\ref{appendix.exp} for the experiment configuration and hyper-parameter settings.

\begin{figure}[t]
    \centering
    \includegraphics[width=1.0\columnwidth]{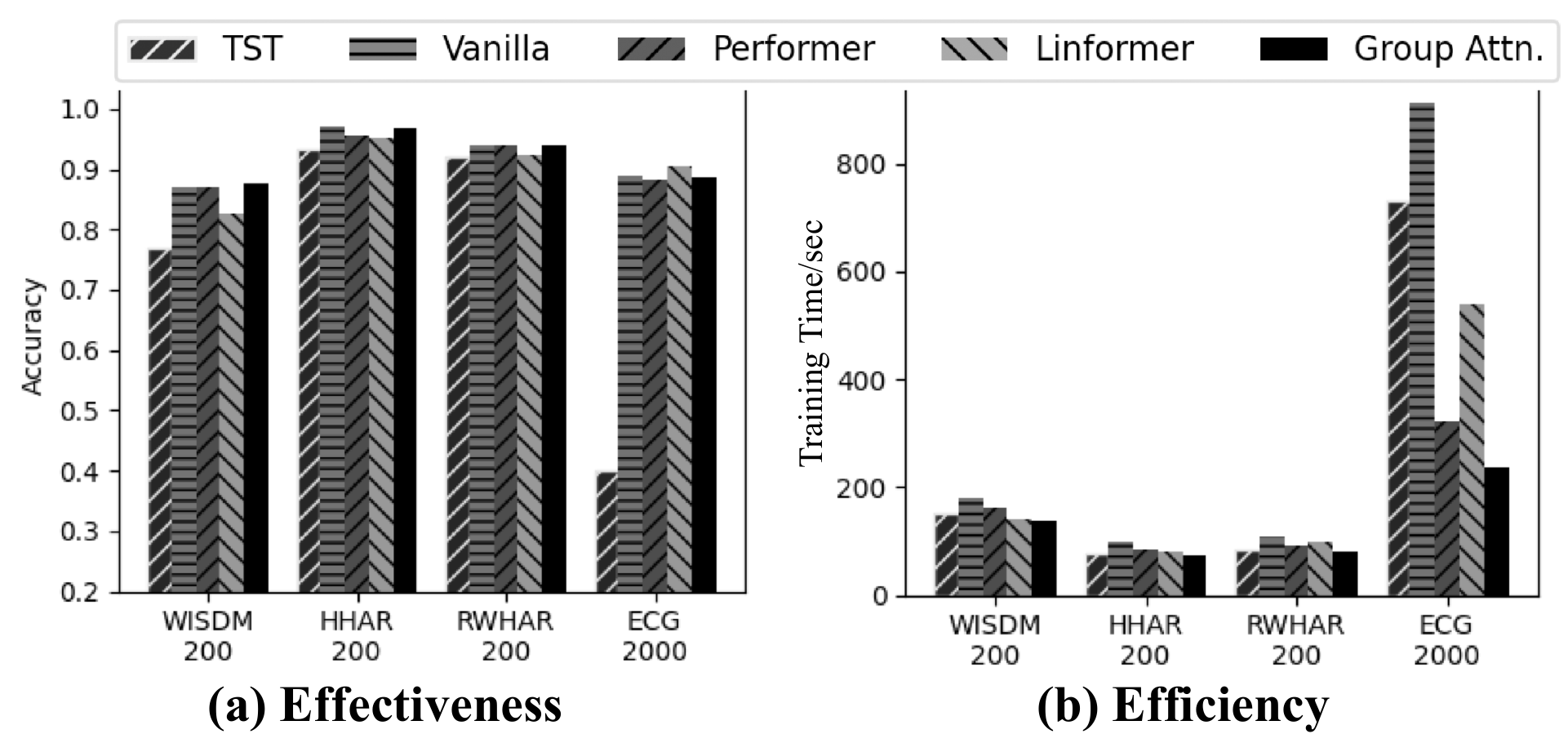}
    \vspace{-6mm}
    \caption{Full-label classification results (multi-variate data).}
    \label{fig.full}
    \vspace{-6mm}
\end{figure}

\begin{table*}[t]
\vspace{-2mm}
\centering
\footnotesize
\begin{tabular}{cc|cc|cc|cc|cc|cc}
\toprule
\multirow{2}{*}{Dataset} &  \multirow{2}{*}{Length} &
\multicolumn{2}{c}{TST~\cite{DBLP:conf/kdd/ZerveasJPBE21}}& \multicolumn{2}{c}{Vanilla} & \multicolumn{2}{c}{Performer} & \multicolumn{2}{c}{Linformer} & \multicolumn{2}{c}{Group Attn.}\\
 \cline{3-12}
 &   &  MSE & Time/s & MSE & Time/s  & MSE & Time/s  & MSE & Time/s  & MSE & Time/s \\
 \hline
 WISDM & 200 & 13.30 & 150.3 & \underline{\textbf{3.240}} & 178.1 & 3.449 & 162.6 & 3.852 & 141.9 & 3.277 & \underline{\textbf{136.7}}  \\
 HHAR & 200 & 1.085 & 78.2 &  \underline{\textbf{0.2968}} &  97.4 & 0.2980 & 82.6 & 0.3198 & 81.1 & 0.2974 & \underline{\textbf{73.3}} \\
 RWHAR & 200 & 0.0882 & 83.9 & \underline{\textbf{0.0478}} & 108.1 & 0.0489 & 89.1 & 0.0572 & 98.4 & \underline{\textbf{0.0478}} & \underline{\textbf{81.3}}\\
  ECG & 2000 & 0.0905 &	696.3 &	0.0037	& 857.9 & 	\underline{\textbf{0.0033}} &	270.2 &	0.0035 &	291.38 &	0.0038 &	\underline{\textbf{164.36}}\\
 MGH  & 10000 & N/A & N/A & N/A & N/A & \underline{\textbf{0.00014}} & 356.2 & 0.00088 & 404.9 & 0.00042 & \underline{\textbf{54.4}} \\
 \bottomrule
\end{tabular}
\caption{Imputation results (multi-variate data). The best results are marked with \underline{bold}.}
\label{tab.imputationMulti}
\vspace{-5mm}
\end{table*}

\begin{table*}[t]
\vspace{-2mm}
\centering
\footnotesize
\begin{tabular}{cc|cc|cc|cc|cc|cc}
\toprule
\multirow{2}{*}{Dataset} &  \multirow{2}{*}{Pretrain Size} &
\multicolumn{2}{c}{TST~\cite{DBLP:conf/kdd/ZerveasJPBE21}}& \multicolumn{2}{c}{Vanilla} & \multicolumn{2}{c}{Performer} & \multicolumn{2}{c}{Linformer} & \multicolumn{2}{c}{Group Attn.}\\
 \cline{3-12}
 &   &  Scratch & Pre. & Scratch & Pre. & Scratch & Pre. & Scratch & Pre. & Scratch & Pre.\\
 \hline
 WISDM & 62,231 & 49.13\% & 50.03\% & 66.16\% & \underline{\textbf{75.89\%}} & 66.09\% & 73.97\% & 50.12\% & 67.44\% & 62.56\% & 75.06\%  \\
 HHAR & 68,294 & 72.56\% & 75.30\% & 75.60\% & 81.35\% & 76.52\% & 80.70\% & 65.94\% & 76.52\% & 76.17\% & \underline{\textbf{82.62\%}}\\
 RWHAR & 63,599 & 69.46\% & 80.41\% & 85.68\% & 91.14\% & 87.54\% & \underline{\textbf{91.33\%}} & 81.03\% & 86.33\% & 86.13\% & 89.63\%\\
 ECG  & 561,358 & 20.98\% & 27.99\% & 42.05\% & 46.16\% & 43.34\% & 45.58\% & 27.19\% & 31.34\% & 42.58\% & \underline{\textbf{46.39\%}} \\
 \bottomrule
\end{tabular}
\caption{Pretrain + few-label finetuning results. The best results are marked with \underline{bold}.}
\label{tab.pretrain}
\vspace{-5mm}
\end{table*}

\subsection{Effectiveness: Transformer-Based Methods}
\label{sec.exp.effective}
We first evaluate the quality of the models trained with full labels from scratch. We then show how the pretraining of \system increases the accuracy of the downstream tasks. 

\subsubsection{full-label training (Multi-variate classification) \nopunct}\ \\
Results shown in Figure~\ref{fig.full}(a) get us the following observations: 

\textbf{(1) \system's advantage over TST.} On all four datasets for the classification tasks,  Group Attn. and the other three baselines that use \system architecture (Vanilla, Performer, and Linformer) outperform TST. In particular, Group Attn. outperforms TST by 49 percentage points on the ECG dataset (88.48\% vs 39.93\%) with long timeseries.
Two deficiencies in TST may cause its poor performance on the long timeseries. Firstly, TST concatenates the output embedding vector of each time stamp, then uses a linear classifier to do classification on the concatenated vector. When the timeseries is long, the linear classifier has so many parameters that it tends to overfit easily. Secondly, TST replaces Layer Normalization in vanilla Transformer with Batch Normalization. 
When the timeseries is long, it can only accommodate a small number of timeseries in each batch, leading to bias in Batch Normalization.

\textbf{(2) Group-attention's advantage over other attention mechanisms.} 
Group Attn. is better than Performer and Linformer on 3 out of 4 datasets for classification. Although Linformer works slightly better than Group Attn. on the ECG dataset (90.37\% vs 88.84\%), its performance is the worst in all other cases compared to any other \system-based methods.
Vanilla computes the attention scores precisely. Thus it is expected to work well. However, Group Attn. outperforms Vanilla on WISDM (87.50\% vs 86.95\%) and is very close to it on other 3 datasets.
This suggests that group attention's approximation quality is good.


\subsubsection{pretraining + few label finetune (Multi-variate classification)\nopunct}\ \\
The results shown in Table~\ref{tab.pretrain} get us the following observation: 

\textbf{(1) Pretraining is effective.} Pretraining always leads to better accuracy than training with a few labels from scratch. In particular, on WISDM data all the methods using \system architecture increase the accuracy by at least 10\%. This is impressive considering we do not have a very large unlabeled pre-training set to use.

\textbf{(2) \system's advantage over TST.} our Group Attn. and other three baselines using \system architecture (Vanilla, Performer, and Linformer) significantly outperform TST on all four classification datasets by 25 percentage points.

\textbf{(3) Group Attention's advantage over other attention mechanisms.} Group Attn. is better than Performer and Linformer on 3 out of 4 datasets. When compared to Vanilla, Group Attn. is better on HHAR and ECG, and comparable on the other two, further confirming its high quality on approximation. 
Further, we notice that Linformer struggles in this setting: in average its accuracy is worse than Vanilla by 8.22\% and Group Attn. by 8.01\%. 
This is because the low-rank projection operation introduces extra model parameters, making Linformer more easily overfit, while overfitting is especially harmful when there are only a few labeled training samples.

\subsubsection{full-dataset training (Multi-variate imputation)\nopunct}\ \\

Similar to classification tasks, the results of {\bf imputation tasks} (Table.\ref{tab.imputationMulti}) show that Group Attn. consistently outperforms the baselines in training time while achieving comparable/better MSE. Again, on the large dataset MGH (length = 10,000), TST and Vanilla fail due to out of memory (OOM) errors.
Methods using \system framework (Group Attn., Performer, Linformer) all achieve very low MSE (are highly accurate). Among them Linformer is the worst.

\subsection{Efficiency: Transformer-based Methods}
\label{sec.exp.efficiency}
We measure the efficiency by the average training time per epoch including the cost of the forward computation + backward propagation and the grouping overhead. 
We first show the results on all the 5 datasets in Sec.~\ref{sec.exp.efficiency.all}. We then vary the length of the timeseries on the MGH dataset to show group attention's scalability on long timeseries in Sec.~\ref{sec.exp.efficiency.length}. 

\subsubsection{Training Time: All Multi-variate Datasets\nopunct}\ \\
\label{sec.exp.efficiency.all}
The results in Fig.~\ref{fig.full}(b) and Table~\ref{tab.imputationMulti} lead to the below observations: 

\textbf{(1) Vanilla Self-Attention is not scalable.} In average, it takes 2-3 minutes to train one epoch when the length of the timeseries is only 200 (WISDM, HHAR, RWHAR), takes over 15 minutes when the length increases to 2,000 (ECG), and fails on the long MGH data when the length reaches 10,000 due to out of GPU memory.  

\textbf{(2) Group Attn.'s advantage over all other attention mechanisms.} 
As we have shown in Sec.~\ref{sec.exp.effective}, Group Attn. is more accurate than Performer and Linformer in classification and imputation tasks, while Group Attn. is always faster than Performer, Linformer, and all other baselines on all 5 multi-variate datasets, thus a {\bf win-win}.

\textbf{(3) The longer the timeseries, the larger the speedup.} 
On the medium sized ECG dataset with a length of 2,000, Group Attn. has a speedup of 3.86/1.36/2.27 compared to Vanilla/Performer/Linformer. When the length increases to 10,000, the speedup on the MGH dataset increases to 6.59/7.48 compared to Performer/Linformer (Vanilla and TST failed in this case) on imputation task (Table.~\ref{tab.imputationMulti}). 
However, even on the short WISDM, HHAR, RWHAR datasets, Group Attn. still consistently outperforms other methods, confirming that it does not introduce much overhead.  
This is because when the length of the timeseries gets longer, Group Attn. gets more opportunities to find windows with similar properties.

\begin{figure}[t]
    \centering
    \includegraphics[width=1.0\columnwidth]{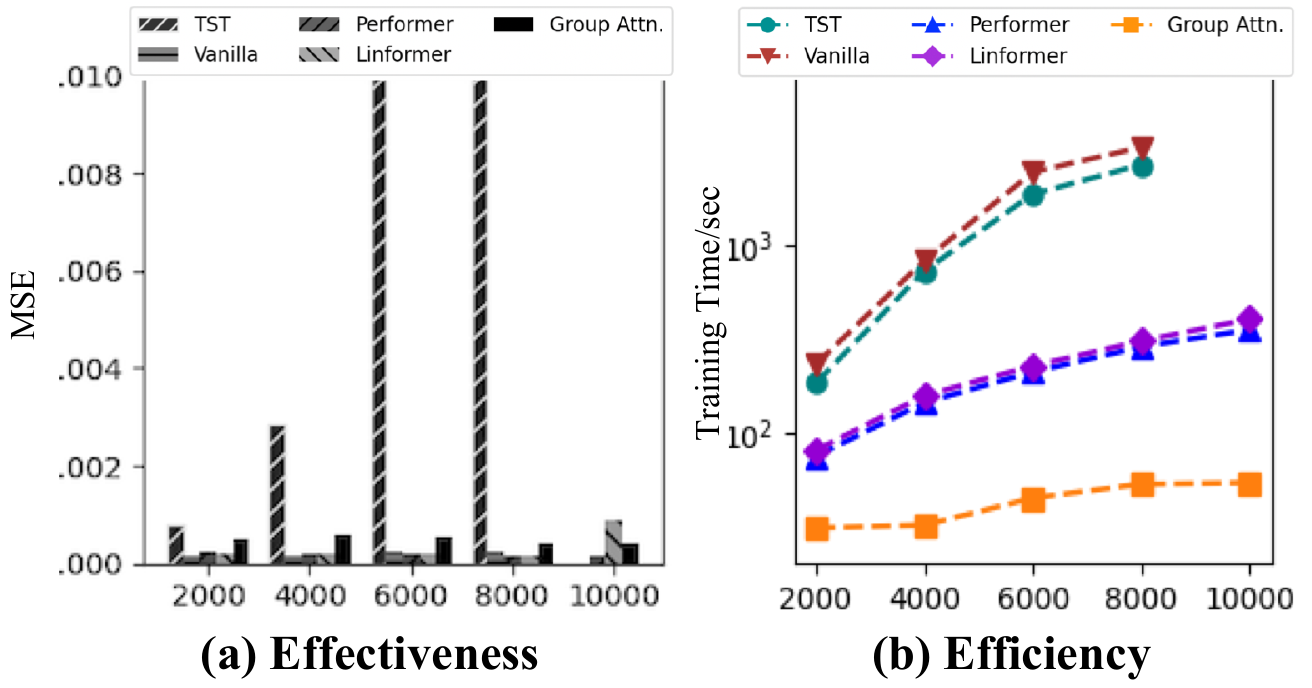}
    \vspace{-7mm}
    \caption{Varying the lengths of timeseries.}
    \label{fig.length}
    \vspace{-2mm}
\end{figure}


\subsubsection{Training time: Varying the Length\nopunct}\ \\
\label{sec.exp.efficiency.length}
In this experiment, we truncate the original MGH timseries into sequences with the lengths at 2000/4000/6000/8000/10000, and compare Group Attn. against Vanilla and other attention mechanisms. Vanilla cannot handle sequences longer than 8000.

The results in Fig.~\ref{fig.length} again show that \textit{the longer the timeseries, the larger the speed up}. With comparable MSE, Group Attn. outperforms Vanilla by 63X.
Moreover, as the length increases from 2000 to 10000, the training time of Group Attn. only increases from 31.2 seconds to 54.4 seconds per epoch.
The reason is that as the timeseires becomes longer, there are more grouping opportunities because of the similarity of the timeseries segments. 

\subsection{Comparison to Non-deep Learning Methods}
\label{sec.exp.univariate}

\begin{figure}[t]
\vspace{-2mm}
    \centering
    \includegraphics[width=0.8\columnwidth]{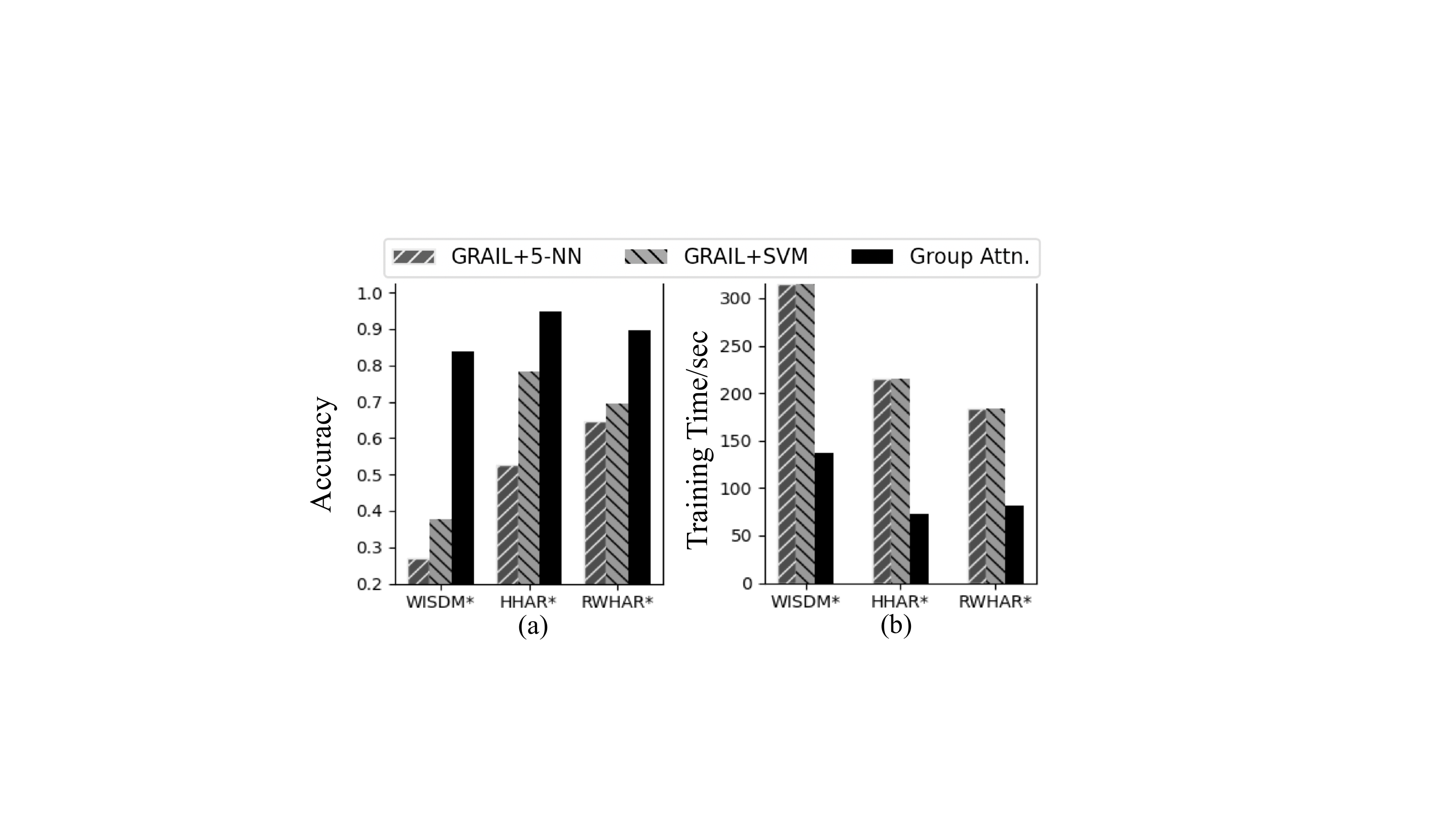}
    \vspace{-4mm}
    \caption{Comparison to non-deep learning method (uni-variate data).}
    \label{fig.full_uni}
    \vspace{-5mm}
\end{figure}

We compare against GRAIL, the SOTA of non-deep learning timeseries representation learning. We use the three uni-variate datasets, because GRAIL only targets uni-variate timeseries.

Results in Fig.~\ref{fig.full_uni} show that on all 3 datasets \system significantly outperforms GRAIL in accuracy by 45, 16, and 21 percentage points because of the expressive power of Transformer.
Moreover, thanks to the GPU-friendly design of \system, it is at least 2$\times$ faster than GRAIL in training time.

\subsection{Ablation Study}
\label{sec.exp.ablation}

\subsubsection{Adaptive Scheduler\nopunct}\ \\
To evaluate the effectiveness of \system's adaptive scheduler (Sec.~\ref{sec.scheduler}), we compare it against a baseline using a fixed group number $N$. We vary $N$ and the error bound threshold $\epsilon$ used by \system.   

From the results in Table~\ref{tab.dynamic} we get the following observations: 

\textbf{(1) Adaptive Scheduler is better than fixed $N$.} Training with Adaptive Scheduler already achieves better or comparable performance compared to the best performing $N$. More specifically, on the MGH dataset, dynamic scheduler always achieves better accuracy and is much faster compared to fixed $N$.
On the ECG dataset, although fixed $N$ is slightly better than adaptive scheduler in accuracy when setting the N as 512, it runs much slower than adaptive scheduler. 
Of course, finding the best $N$ that balances the accuracy and running time requires careful tuning.  
 
\textbf{(2) Adaptive Scheduler is tuning free.} It is robust on both accuracy and running time when $\epsilon$ varies, while the results of fixed $N$ vary significantly when the value of $N$ changes.
Therefore, Adaptive Scheduler frees the users from tuning the $\epsilon$ threshold, while it is hard to find an appropriate $N$ for a given dataset.

\begin{table}[htbp]
\vspace{-2mm}
\centering
\footnotesize
\begin{tabular}{cc|c|c|cc}
\toprule
Dataset & Task & Scheduler & Parameter & Metric & Time\\
 \hline
 \multirow{6}{*}{ECG} & \multirow{6}{*}{Class.} & \multirow{3}{*}{Dynamic} & 1.5 & 88.34\% & 292.5  \\
  & &  & 2 & 88.48\% & 236.8 \\
  &   &  & 3 & 87.83\% & 216.8 \\
   \cline{3-6}
   &   & \multirow{5}{*}{Fixed}  & 64 & 87.50\% & 255.2\\
   &   &   & 128 & 88.96\% & 297.2\\
   &   &  & 256 & 88.82\% & 414.1 \\
   &   &  & 512 & 90.03\% & 662.6 \\
   &   &  & 1024 & 88.65\% & 873.7\\
   \hline \hline
   
 \multirow{6}{*}{MGH} & \multirow{6}{*}{Imput.} & \multirow{3}{*}{Dynamic} & 1.5 & 0.00041 &  60.7  \\
  & &  & 2 & 0.00040  & 57.9 \\
  &   &  & 3 & 0.00042  & 54.4   \\
   \cline{3-6}
   &   & \multirow{4}{*}{Fixed}  & 128 & 0.00054 & 128.6 \\
   &   &  & 256  & 0.00053 & 190.2\\
   &   &  & 512  & 0.00049 & 240.8\\
   &   &  & 1024  & 0.00046 & 323.3 \\
 \bottomrule
\end{tabular}
\caption{Adaptive Scheduling VS Fixed N.}
\label{tab.dynamic}
\vspace{-3mm}
\end{table}

\begin{table}[htbp]
\vspace{-4mm}
\centering
\footnotesize
\begin{tabular}{c|c}
\toprule
 Pretrain Data size & Few-label Accuracy \\
 \hline
   N/A & 62.56\%   \\
   \hline
   12,446 & 72.94\%\\
   24,892 &  72.78\%\\
   37,338 & 74.10\%\\
   49,784 & 74.22\%\\
   62,231 & 75.06\% \\
 \bottomrule
\end{tabular}
\caption{\system Pretraining: increasing sizes of pretrain set.}
\label{tab.pretrainsize}
\vspace{-8mm}
\end{table}

\vspace{-1mm}
\subsubsection{The Sizes of the Pretraining Data\nopunct}\ \\
Next, we evaluate how the number of unlabeled data influences the effectiveness of pretraining. 
To get empirical results, we pretrain \system on WISDM dataset with 20\%/40\%/60\%/80\% of the pretraining data and finetune each pretrained model with 100 labels per class. 
The results in Table~\ref{tab.pretrainsize} show that: 
\textbf{(1) The more pretraining data, the larger the improvement.} The accuracy increases with the sizes of the pretraining data; \textbf{(2) Marginal utility diminishing.} The first 20\% pretraining data gives a 10.38\% improvement in accuracy (72.94\% vs 62.56\%), while the remaining 80\% pretraining data only gives an additional improvement of 2.12\% (75.06\% vs 72.94\%). 

\section{Related work}
\label{sec.related}

\subsection{Timeseries Analytics}
There is a great deal of prior work on timeseries analytics methods. This work can be divided into three categories: (1) non-deep learning methods; (2) CNN/RNN-based deep learning methods; and (3) Transformer-based deep learning methods.

\noindent\textbf{Traditional Methods.} 
These methods, such as TS-CHIEF~\cite{Shifaz2020TSCHIEFAS}, HIVE-COTE~\cite{10.1145/3182382},  ROCKET~\cite{DBLP:journals/datamine/DempsterPW20} have achieved notable performance on public datasets. Despite that, traditional methods suffer from one or more issues: they (1) rely on expert knowledge for feature extraction;
(2) incur heavy computation cost and are inappropriate for GPU devices; (3) support only uni-variate timeseries; (4) perform classification solely. 
Some work~\cite{DBLP:conf/kdd/ZerveasJPBE21} shows that the transformed-based methods outperform these traditional methods especially on multi-variate timeseries. 


In particular, as the SOTA of timeseries {\bf representation learning}, GRAIL~\cite{paparrizos2019grail} extracts landmarks from data and computes the representations with the combination of the landmarks. However, GRAIL only supports uni-variate timeseries. Our experiments (Sec.~\ref{sec.exp.univariate}) show that \system significantly outperforms GRAIL in both effectiveness and efficiency on uni-variate timeseries.



\noindent\textbf{CNN/RNN-based Deep Learning Methods.}
CNN-based methods, such as InceptionTime~\cite{ismail2020inceptiontime} and Resnet~\cite{he2016deep}, are good at classification tasks, but can not handle generative tasks such as forecasting because of the inductive bias of convolution networks. 
RNN-based methods, such as Brit~\cite{cao2018brits} and deepAR~\cite{salinas2020deepar}, are capable for classification, regression and generation. However, the recurrent structure brings a lot of problems: (1) limiting the model's ability in capturing long-range correlation; (2) notoriously difficult to train~\cite{pascanu2013difficulty} because of gradient vanishing and exploding problem. As a result, such methods can hardly scale to very long timeseries. 

\noindent\textbf{Transformer-based Deep Learning Methods.}
Given that Transformer is the best choice for backbone in almost all sequence modeling tasks, some effort has been made to apply Transformer to timeseries analytics.
Targeting forecasting of uni-variate timeseries, LogTrans~\cite{li2019enhancing} introduced a log sparsity assumption to attention computation.
Informer~\cite{zhou2021informer} pushes LogTrans a step further and scales forecasting to multi-variate timeseries. Autoformer~\cite{wu2021autoformer} performs forecasting by decomposing timeseries into two parts, i.e. the trend part and the seasonal part.

For imputation tasks, CDSA~\cite{ma2019cdsa} outperforms statistical methods and the SOTA of RNN-based method Brit~\cite{cao2018brits} on 3 public and 2 competition datasets. 
For timeseries classification, AutoTransformer~\cite{ren2022autotransformer} performs architecture search to adapt to the tasks in different domains. 
For timeseries anomaly detection, Anomaly Transformer~\cite{xu2021anomaly} outperforms many widely-used methods such as OmniAnomaly~\cite{su2019robust}, assuming the attention score maps show Gaussian distribution.

All of these works are designed for specific tasks, rather than functioning as a {\bf representation learning} framework to serve different downstream tasks. To fill this gap, some researchers proposed a Transformer-based architecture, called TST~\cite{DBLP:conf/kdd/ZerveasJPBE21}. Like \system, TST supports regression, classification, and unsupervised learning through the ``cloze test'' pretraining task on timeseries. 
However, TST directly uses the classical Vanilla self-attention, thus not scalable to long timeseries as shown in our experiments (Sec.~\ref{sec.exp.efficiency.length}).

\subsection{Efficient Transformers}
The need of improving the scalability of Transformers has led to more efficient variations of Transformers, especially for accommodating long text data in NLP~\cite{tay2020efficient}. 

Introducing fixed/random patterns to self-attention mechanism is an intuitive idea. Sparse Transformer~\cite{child2019generating} and Longformer~\cite{beltagy2020longformer} only compute attention at fixed intervals. ETC~\cite{ainslie2020etc} and BigBird~\cite{zaheer2020big} use global-local attention: the attention computation is limited within a fixed radius, while some auxiliary tokens are added to attend/get attended globally. 
The deficiencies of fixed attention patterns are obvious: it heavily depends on users to give an optimal setting.
  

To decrease the reliance on human labor, some works seek to introduce learnable/adaptive attention patterns instead of fixed patterns. Reformer~\cite{kitaev2020reformer} proposed  only computing the dominant attention terms based on their observation of sparsity in attention matrix from language/image data. Such sparsity is intuitive in language data, in which a word's attention mainly focuses on the nearby sentences. However, attention in timeseries data shows strong seasonal patterns rather than sparse patterns, mainly as result of  the periodicity of timeseries data. Therefore, such works do not work well for timeseries.


Apart from introducing attention patterns, some works seek to solve this problem with 
applied mathematics techniques. Linformer~\cite{wang2020linformer} performs a projection to decrease the size of query, key and value matrices before attention computation, because the attention matrix tends to be low-ranked. 
Performer~\cite{choromanski2020rethinking} uses linear functions to approximate the kernel function \textit{softmax}, making attention computation commutative. When the sequence length is far greater than the dimension of embedding vectors, Performer benefits from changing the order of matrix multiplication.
Linformer and Performer do not depend on the unique properties of language data, thus potentially fitting timeseries better than other techniques, which is why we compared against them in our experiments. 
However as shown in Sec.~\ref{sec.exp}, our group attention significantly outperforms them in both accuracy and efficiency (training time), because group attention fully leverages the periodicity of timeseries.

\section{Conclusion}
\label{sec.conclusion}
In this work, we presented \system, an automatic, self-supervised, and scalable timeseries analytics tool. \system effectively adapts Transformer, popular in NLP, into timeseries analytics. 
As the key component of \system, group attention eliminates the performance bottleneck of the classical self-attention mechanisms, thus successfully scaling \system to highly complex, long timeseries data.
Our experiments confirm that \system significantly speeds up the state-of-the-art by 63X with a better accuracy. 



     \bibliographystyle{ACM-Reference-Format}
        \bibliography{reference.bib}
    

\appendix

\begin{sloppypar}
\section{Appendix: Supplementary Material}
\subsection{Experiment Configuration and Hyper-parameter Settings}
\label{appendix.exp}
\noindent\textbf{Configuration.} All models were trained on an NVIDIA Tesla V100 16GB GPU. All the methods are optimized with AdamW~\cite{loshchilov2017decoupled} of which the starting learning rate and weight decay parameter are both $1e^{-4}$. In full-label training scenario, we train the models for 100 epochs. In ``pretraining + few-label finetuning scenario'', as the pretrained models require fewer epochs to converge~\cite{DBLP:conf/kdd/ZerveasJPBE21}, we train the model for 50 epochs. For a fair comparison, the baselines use a maximal batch size within GPU's capacity during training.

As for model hyper-parameter setting, \system and the baselines use a Transformer structure balancing \textit{\textbf{Vanilla}} 's accuracy and efficiency: 8-layer stack of 2-head attention with hidden vectors in dimension of 64. Convolution kernel size is set to 5 by default.
We set the error bound threshold ($\epsilon$, Sec.~\ref{sec.scheduler.group}) of Group Attention to 2, as it balances the accuracy and the efficiency in general on all datasets.
Because Linformer requires the users to set the sizes of projection matrix, in different settings we choose an accuracy-efficiency balancing one among \{64,128,256,512\}.

\subsection{Efficient Computation of Group Attention}
\label{appendix.groupAttention}
\begin{algorithm}
    \caption{Efficient Computation of Group Attention}
    \label{algo.grpattn}
    \small
    \begin{algorithmic}[1] 
    \Require $Q,V,R,COUNT,BELONG$
    \Ensure $Q,V \in \mathbb{R}^{n*d}$,$ R \in \mathbb{R}^{N*d}$,$COUNT \in \mathbb{N}^{N}$,$BELONG \in \mathbb{N}^{n}$
            \Function {group\_attention}{$Q,V,R$}
                \For{$i = 0 \to N-1$}
                    \State $\widetilde{v}_i \gets \sum_{j=0}^{n-1}(BELONG_j==i) v_j$
                \EndFor
                
                \State $\widetilde{P} \gets QR^T$
                
                \For{$i = 0 \to n-1$}
                    \For{$j = 0 \to N-1$}
                    \State $w_{i,j} \gets exp(\widetilde{P}_{i,j})COUNT_j$
                    \EndFor
                \EndFor
                
                \For{$i = 0 \to n-1$}
                    \State $s_i \gets \sum_{j=0}^{N-1} w_{i,j}$ 
                \EndFor
                
                \For{$i = 0 \to n-1$}
                    \State $o_{i} \gets \sum_{j=0}^{N-1}\frac{exp(\widetilde{P}_{i,j})}{s_i}\widetilde{v}_j$
                \EndFor
                    
                \State \Return{$O$}
            \EndFunction
            
        \end{algorithmic}
\end{algorithm}

In Alg.~\ref{algo.grpattn}, we denote $COUNT_i$ to be the size of the $i^{th}$ group, $N$ to be the number of groups, $\mathbf{r}_i$ to be the representative key of the $i^{th}$ group and $\mathbf{R}$ to be the matrix consisting of all $\mathbf{r}_i$, $BELONG_i$ to be the group that $\mathbf{k}_i$ belongs to. $Q,V$ are the packing matrices of query vectors and value vectors as described in Sec.\ref{sec.preliminary}. Alg.~\ref{algo.grpattn} outputs the packing matrix $O$ for new feature emebddings $\{o_1,...,o_n\}$, where $o_i$ corresponds to the feature embedding of $win_i$. 
Lines 2-3 implement the embedding aggregation operation, while Lines 8-11 implement the group softmax function.

\subsection{The Algorithms and Optimality Proof for Dynamically Determing Batch Size}
\label{appendix.batch}

\begin{algorithm}[h]
    \caption{Binary Search for Batch Size}
    \label{algo.binsearch}
    \begin{algorithmic}[1] 
    \Require $L,N$
    \Ensure $1 \leq L \leq L_{max}, 1\leq N \leq L$
            \Function {binary\_search}{$L,N$}
                \State $L \gets 1$
                \State $R \gets MaxBatchSize$
                \State $data \gets RandomTimeSeries\ in\ length\  L$
                \State $B_{temporal}$
                \While{$L \leq R$}
                     
                    \State $Input \gets data \times B_{temporal}$
                    \State $ModelForward(Input)$
                    \State $ModelBackward$
                    \State $u \gets  \frac{PeakMemoryUsage}{TotalMemory}$
                    
                    \If{$0.9 > u$} 
                    \State {$L \gets B_{temporal}+1$}
                    \State {$B \gets B_{temporal}$}
                    \Else 
                    \State{$R \gets B_{temporal}-1$}
                    \EndIf
                    \State $B_{temporal}\gets \frac{\lfloor L+R \rfloor}{2}$
                \EndWhile
                
                \State \Return{$B$}
            \EndFunction
            
        \end{algorithmic}
\end{algorithm}

\begin{algorithm}[h]
    \caption{Dynamic Programming for Plane Division}
    \label{algo.dpdiv}
    \footnotesize
    \begin{algorithmic}[1] 
    \Require $L_i,N_i,B_i,L_{max}$
    \Ensure $1 \leq L_i \leq L_{max}, 1\leq N_i \leq L_i$
            \Function{cost}{S}
                \If{$|S|<M$}
                    \Return{$+\infty$}
                \EndIf
                \State{$L,N,B \gets points\ in\ S$}
                \State{$f \gets function\ fitting(B|L,N)$}

\Return{$E(B,L,N|f)$}
            \EndFunction
                
            \Function {dynamic\_programming}{$L_i,N_i,L_{max}$}
                \For {$l_1=1 \to L_{max}$}
                    \For {$l_2=1 \to l_1$}
                        \For{$n=1 \to l_1$}
                            \State{$S \gets points\ set\ in\ \{l_2 \leq L \leq l_1,N \leq n\}$}
                            \State{$g(n) \gets COST(S)$ }
                            \For{$i=1 \to n$}
                                \State {$S \gets  points\ set\ in\ \{l_2 \leq L \leq l_1, i\leq N \leq n\}$}
                                \State{$g(n) \gets min(g(n),g(i)+COST(S))$}
                            \EndFor
                        \EndFor
                    \State{$f_{l_2,l_1} \gets g(l_1)$}
                    \EndFor
                \EndFor
            
                \State{}
            
                \For {$l=1 \to L_{max}$}
                    \State {$dp(l) \gets f(1,l)$}
                    \For {$i=1 \to l$}
                        \State {$dp(l) \gets min(dp(l),dp(i)+f(i,l))$}
                    \EndFor
                \EndFor
                \Return{$dp(L_{max})$}
            \EndFunction
            
        \end{algorithmic}
\end{algorithm}

We describe Alg.~\ref{algo.dpdiv} and intuitively show its optimality. 
We assume that Scipy~\cite{2020SciPy-NMeth} learns an optimal function in Line 4 so that function COST gives the optimal estimation error when fitting the points in set $S$. 
When fitting very few points, we assign an infinite cost to prevent a biased fitting function (Line 2).  
$g(n)$ denotes the minimal estimation error for points in sub-plane $\{l_2 \leq L \leq l_1, N \leq n\}$. In Lines 11-13, we enumerate all possible ways of cutting $\{l_2 \leq L \leq l_1, N \leq n\}$ horizontally into two sub-plane $\{l_2 \leq L \leq l_1, N \leq i\}$ and $\{l_2 \leq L \leq l_1, i \leq N \leq n\}$ by iterating $i$ from 1 to n. 
Choosing the cutting strategy that minimizes estimation error gets us a $g(l_1)$ with minimal estimation error for sub-plane $\{l_2 \leq L \leq l_1, N \leq l_1\}$, which is recorded as $f_{l_1,l_2}$ in Line 14. 
$dp(l)$ denotes the minimal estimation error for sub-plane $\{L \leq l\}$. 
We enumerate all the possible ways of cutting $\{ L \leq l\}$ vertically into two sub-plane $\{  L \leq i\}$ and $\{i \leq L \leq l\}$ by iterating $i$ from 1 to $l$ (Line 17-19). Finally, we have the minimal estimation error for the whole plane as $dp(L_{max})$. 
Based on the above discussion, this algorithm guarantees to not miss any better solution, hence optimal.



\subsection{The Correctness of Group Attention}
\label{appendix.proof.groupAttention}
\begin{lemma}
\label{lm.grpattnalgo}
\sloppypar
Assuming the windows belonging to the same group $G_i$ have the same key vector, i.e. $k_j=r_i (win_j \in G_i)$, then the feature embedding $O$ produced by the original self-attention mechanism is identical to the output of our group attention mechanism implemented in Algorithm~\ref{algo.grpattn}.

\end{lemma}

\begin{proof}
Denote $\widetilde{k_j}$ to be the representative vectors of $k_j$, i.e. $\widetilde{k_j}=r_i=k_j (win_j \in G_i)$. Algorithm~\ref{algo.grpattn} gives that
\begin{equation}
\label{eq.grpres}
\small
\begin{aligned}
    \widetilde{v}_i&=\sum_{j=0}^{n-1}(BELONG_j==i)\mathbf{v}_j, \ \widetilde{P}_{i,j}=\mathbf{q}_i \cdot \mathbf{r}_j\\
    s_i&=\sum_{j=0}^{N-1}exp(\widetilde{P}_{i,j})COUNT_j, \ \widetilde{o}_i=\sum_{j=0}^{N-1}\frac{\widetilde{P}_{i,j}}{s_i}\widetilde{v}_j
\end{aligned}
\end{equation}

By the canonical self-attention mechanism introduced in Sec.~\ref{sec.preliminary}, we get:
\begin{equation}
\small
\label{eq.grpattn1}
P_{i,j}=\mathbf{q}_i \cdot \mathbf{k_j},\ A_{i,j}=\frac{exp(P_{i,j})}{\sum_{k=0}^{n-1}exp(P_{i,k})}, \ \mathbf{o}_i=\sum_{j=0}^{n-1}A_{i,j}\mathbf{v}_j
\end{equation}



With \ref{eq.grpres} and \ref{eq.grpattn1}, we have
\begin{equation}
\label{eq.sumexpahat}
\small
\begin{aligned}
    \sum_{j=0}^{n-1}exp(P_{i,j})&=\sum_{j=0}^{n-1}exp(\mathbf{q}_i \cdot \mathbf{k}_j)\\
    &=\sum_{j=0}^{N-1} \sum_{x=0}^{n-1}(BELONG_x==j)exp(\mathbf{q}_i \cdot \mathbf{k}_x)\\
    &=\sum_{j=0}^{N-1} exp(\mathbf{q}_i \cdot \mathbf{r}_j) \sum_{x=0}^{n-1} (BELONG_x==j) \\
    &=\sum_{j=0}^{N-1} exp(\mathbf{q}_i \cdot \mathbf{r}_j) COUNT_j
    \\
    &=\sum_{j=0}^{N-1} exp(\widetilde{P}_{i,j}) COUNT_j\\
    &=s_i\\
\end{aligned}
\end{equation}

Further,
\begin{equation}
\label{eq.output}
\small
\begin{aligned}
\mathbf{o}_i&=\sum_{j=0}^{n-1} A_{i,j}\mathbf{v_j} \\
&=\sum_{j=0}^{N-1}\sum_{x=0}^{n-1} (BELONG_x==j)A_{i,x}\mathbf{v}_x \\
&=\sum_{j=0}^{N-1}\sum_{x=0}^{n-1}(BELONG_x==j)\frac{exp(P_{i,x})}{\sum_{k=0}^{n-1}exp(P_{i,k})}\mathbf{v}_x\\
&=\sum_{j=0}^{N-1}\sum_{x=0}^{n-1}(BELONG_x==j)\frac{exp(\mathbf{q}_i \cdot \mathbf{k}_x)}{\sum_{k=0}^{n-1}exp(P_{i,k})}\mathbf{v}_x\\
&=\sum_{j=0}^{N-1}\sum_{x=0}^{n-1}(BELONG_x==j)\frac{exp(\mathbf{q}_i \cdot \mathbf{r_j})}{\sum_{k=0}^{n-1}exp(P_{i,k})}\mathbf{v}_x\\
&=\sum_{j=0}^{N-1} \frac{exp(\mathbf{q}_i \cdot \mathbf{r_j})}{\sum_{k=0}^{n-1}exp(P_{i,k})} \sum_{x=0}^{n-1}(BELONG_x==j)\mathbf{v}_x\\
&=\sum_{j=0}^{N-1} \frac{exp(\mathbf{q}_i \cdot \mathbf{r_j})}{\sum_{k=0}^{n-1}exp(P_{i,k})} \widetilde{v_j}\\
\end{aligned}
\end{equation}

Combining \eqref{eq.grpres}, \eqref{eq.sumexpahat} \eqref{eq.output}, we have
$\mathit{\mathbf{o}_i=\sum_{j=0}^{N-1}\frac{\widetilde{P}_{i,j}}{s_i}\widetilde{v}_j=\widetilde{o}_i}$.

This concludes that the output of our group attention is identical to vanilla self-attention's.
\end{proof}

\subsection{The Proof of Error Bound (Lemma 1)}
\label{appendix.proof.errorBound}
\begin{proof}
\renewcommand{\qedsymbol}{}
We have
\begin{equation}
\begin{aligned}
    \frac{exp(\overline{P}_{i,j})}{exp(P_{i,j})}&=\frac{exp({\mathbf{q}}_i \cdot \widetilde{\mathbf{k}}_j)}{exp(\mathbf{q}_i \cdot \mathbf{k}_j)} = exp({\mathbf{q}}_i \cdot (\widetilde{ \mathbf{k}}_j-\mathbf{k}_j))\\
   &=exp(||\mathbf{q}_i|| \cdot ||\widetilde{\mathbf{k}}_j-\mathbf{k}_j||\cdot cos(\mathbf{q}_i,\widetilde{\mathbf{k}}_j-\mathbf{k}_j))
\end{aligned}
\end{equation}

So
\begin{equation}
\label{equ.abound}
exp(-dR) \leq \frac{exp(\overline{P}_{i,j})}{exp(P_{i,j})} \leq exp(dR)
\end{equation}

Then we have:
\begin{equation}
\label{equ.Aequ}
\begin{aligned}
\frac{\overline{A}_{i,j}}{A_{i,j}}&=\frac{exp(\overline{P}_{i,j})}{\sum_{k=1}^{n} exp(\overline{P}_{i,k})} / \frac{exp({P}_{i,j})}{\sum_{k=1}^{n} exp({P}_{i,k})}\\
&=\frac{exp(\overline{P}_{i,j})}{exp({P}_{i,j})} \frac{\sum_{k=1}^{n} exp({P}_{i,k})}{\sum_{k=1}^{n} exp(\overline{P}_{i,k})}
\end{aligned}
\end{equation}

Combining (\ref{equ.abound}) (\ref{equ.Aequ}), the error is bounded by
\begin{equation}
\label{equ.Abound}
exp(-2dR) \leq \frac{\overline{A}_{i,j}}{A_{i,j}} \leq exp(2dR)
\end{equation}

Thus, if $\mathit{d \leq \frac{\ln(\epsilon)}{2R}}$, $\mathit{\frac{1}{\epsilon} \leq \frac{\overline{A}_{i,j}}{A_{i,j}} \leq \epsilon}$. This proves Lemma~\ref{lm.grperrorbound}.
\end{proof}

\subsection{The Proof of Merge Operation (Lemma 2)}
\label{appendix.proof.merge}
\begin{proof}
\renewcommand{\qedsymbol}{}
Denote the cluster size of $cluster_k$ to be $n_k$.After mergeing, the new center will be: $$c'= \frac{\sum_{i=1}^m n_{k_i}c_{k_i}}{\sum_{i=1}^m n_{k_i}}$$
For $\forall i \in [1,m],\forall x \in cluster_{k_i}$, it holds that:
\begin{equation}
\small
\label{eq.tri}
\begin{aligned}
        |x-c'| &\leq |x-c_{k_i}|+|c_{k_i}-c'|\  (Triangle\  inequality)\\
        &=|x-c_{k_i}|+| \frac{\sum_{j=1}^m n_{k_j}}{\sum_{j=1}^m n_{k_j}}c_{k_i}-
        \frac{\sum_{j=1}^m n_{k_j}c_{k_j}}{\sum_{j=1}^m n_{k_j}}|\\
        &=|x-c_{k_i}|+| 
         \frac{\sum_{j=1}^m n_{k_j}(c_{k_i}-c_{k_j})}{\sum_{j=1}^m n_{k_j}}|\\
        &=|x-c_{k_i}|+\frac{|\sum_{j=1}^m n_{k_j}(c_{k_i}-c_{k_j})|}{\sum_{j=1}^m n_{k_j}}\\
        &\leq |x-c_{k_i}|+\frac{\sum_{j=1}^m n_{k_j}|c_{k_i}-c_{k_j}|}{\sum_{j=1}^m n_{k_j}}\\
        &= \frac{\sum_{j=1}^m n_{k_j}(|c_{k_i}-c_{k_j}|+|x-c_{k_i}|)}{\sum_{j=1}^m n_{k_j}}\\
        &\leq \frac{\sum_{j=1}^m n_{k_j}d}{\sum_{j=1}^m n_{k_j}} =d
\end{aligned}
\end{equation}
\end{proof}

\subsection{Downstream Tasks}
\label{appendix.downstream}
\system supports a variety of downstream tasks. In this section, we show that with minimal modification \system can effectively support classification, imputation and forecasting tasks. 
Other unsupervised tasks such as similarity search or clustering are naturally supported by extracting feature embeddings from \system.

\subsubsection{\textbf{Classification}\nopunct}\ \\
To classify timeseries, we input timeseries to the model as described in Sec.~\ref{sec.rita} and attach a special token \textbf{[CLS]} as the first input embedding. \textbf{[CLS]}'s embedding acts as the embedding for the entire timeseries, and the output representation of \textbf{[CLS]} is fed into a classifier: $\mathit{y=Softmax(W_{cls}Z_{[CLS]}+B_{cls})}$, where $Z_{[CLS]}\in \mathbb{R}^d$ is the output representation of \textbf{[CLS]}, C is the number of classes, and $\mathit{W_{cls} \in \mathbb{R}^{C \times d}, B_{cls} \in \mathbb{R}^{C}}$ are learnable parameters for classification task. 
The result vector $y\in \mathbb{R}^{C}$ represents the possibility that the input timeseries belongs to each class.

We apply Cross Entropy Loss as the loss function of the classification task~\cite{cox1958regression}:
$\mathit{L=\frac{1}{C}\sum_{i=1}^C -\hat{y}(i)log(y(i))}$, where $\hat{y}$ is a binary indicator for ground truth label:
\vspace{-0.5mm}
\begin{eqnarray}
\hat{y}(i) =
\begin{cases}
1   & i\  \text{is ground truth label} \\
0   & otherwise
\end{cases}
\end{eqnarray}

\subsubsection{\textbf{Imputation}\nopunct}\ \\
\label{sec.transformer.imputation}
Timeseries are mainly generated by sensors, a common problem of which is missing values. This becomes a challenge when many downstream analytics require the missing values to be recovered. The recovering task is imputation. 

Denote the real timeseries as $T_{r} \in \mathbb{R}^{t \times m}$, the observed timeseries with missing values as $T_{o} \in \mathbb{R}^{t \times m}$, and the set of missing values' positions as $M$. We scale the values of all timeseries to non-negative and use a special value (-1) to indicate missing values:
\vspace{-0.5mm}
\begin{eqnarray}
\label{eq.imp_task}
T_{o}(i,j) =
\begin{cases}
-1   & (i,j) \in M\\
T_{r}(i,j)   & (i,j) \notin M \\
\end{cases}
\end{eqnarray}

$T_{o}$ is fed into the \system as input, and the output representations are concatenated and fed into a {\it Transpose Convolution} layer which decodes the output embedding vectors from hidden space to timeseries values, corresponding to the convolution operation in the input stage, i.e., 
$\mathit{Y=TransposeCNN(Z_1 \textcircled{+} Z_2 \textcircled{+} ... \textcircled{+} Z_n)}$, where $Y \in \mathbb{R}^{t \times m}$ is the recovered timeseries, and $Z_i \in \mathbb{R}^d$ is the output of each position.

Here Mean Square Error is chosen as the loss function~\cite{thompson1990mse}:
$L=\frac{1}{|M|}\sum_{(i,j) \in M} (Y(i,j)-T_{r}(i,j))^2$.

\subsubsection{\textbf{Forecasting}\nopunct}\ \\
Forecasting can be regarded as a special case of imputation, in which all missing values are at the end of timeseries. 

So like in imputation task, we scale the timeseries to non-negative and use a special value (-1) to indicate the values to be predicted:
\begin{eqnarray}
T_{observed}(i,j) =
\begin{cases}
T_{real}(i,j)   & i \leq t_{observed} \\
-1   & otherwise
\end{cases}
\end{eqnarray}

Where $t_{observed}$ is the observed timestamp. Then the output representations are fed into a Transpose Convolution layer using Mean Squared Error as loss function, as described above.

\subsubsection{\textbf{Other Unsupervised Tasks}\nopunct}\ \\
\system naturally supports other unsupervised tasks, such as similarity search and clustering~\cite{lin1995fast,keogh2001dimensionality,liao2005clustering}, by producing the embedding of one timeseries (output representation of the special token \textbf{[CLS]}).
Clustering can be performed on the embeddings with flexible choice of distance metrics. Similarly, a high dimensional similarity search system~\cite{johnson2019billion, malkov2018efficient, jegou2010product} can be built on the embeddings.

\subsection{Inference Time}
\label{sec.sup.evaltime}

\begin{table}[htbp]
\centering
\small
\scalebox{0.88}{
\begin{tabular}{cc|c|c|c|c|c}
\toprule
Dataset &  Length & {TST\cite{DBLP:conf/kdd/ZerveasJPBE21}}& Vanilla & Performer & Linformer & Group Attn.\\
 \cline{3-7}
 \hline
 WISDM & 200 & 2.18 & 2.26 & 2.35 & 2.22 & 2.17\\
 HHAR & 200  & 1.19 & 1.23 & 1.28 & 1.21 & 1.18\\
 RWHAR & 200 & 1.32 & 1.37 & 1.42 & 1.34 & 1.31\\
ECG & 2000 & 18.44 & 15.26 & 5.80 & 6.08 & 5.16\\
 \bottomrule
\end{tabular}}
\caption{Inference time:  Classification on multi-variate data \quad \quad \quad (seconds).}
\label{tab.infcls}
\end{table}

\begin{table}[htbp]
\centering
\small
\scalebox{0.88}{
\begin{tabular}{cc|c|c|c|c|c}
\toprule
Dataset &  Length & 
 {TST\cite{DBLP:conf/kdd/ZerveasJPBE21}} & Vanilla & Performer & Linformer & Group Attn.\\
 \cline{3-7}
 \hline
 WISDM & 200 & 2.03 & 2.11 & 2.19 & 2.07 & 2.02\\
 HHAR & 200 & 1.11 & 1.14 & 1.19 & 1.12 & 1.10\\
 RWHAR & 200 & 1.23 & 1.27& 1.32& 1.25& 1.22 \\
ECG & 2000 & 17.22 & 14.32 & 4.73 & 4.99 & 4.11\\
MGH & 10000 & N/A & N/A & 6.58 & 6.88 & 1.35\\
 \bottomrule
\end{tabular}}
\caption{Inference time: Imputation on multi-variate data (seconds). }
\label{tab.infimp}
\vspace{-5mm}
\end{table}

In this section, we present the average inference time on validation sets. The results in Table.~\ref{tab.infcls} and ~\ref{tab.infimp} correspond to the average inference time on validation sets of classification and imputation tasks, respectively. Consistent with the results in Section.~\ref{sec.exp.efficiency}, our method Group Attn. outperforms the baselines on both classification and imputation tasks, particularly on the datasets comprising long timeseries (ECG and MGH).

\end{sloppypar}

\end{document}